\newcommand{\argmax}{{\rm argmax}}
\newcommand{\yy}[1]{{\it Draft---{#1}}}
\newcommand{\ct}[1]{{\it Cite---{#1}}}
\newcommand{\trm}[1]{\textit{#1}}
\newcommand{\tv}{{^{\rm TrV}}} 
\newcommand{\sv}{{^{\rm SuV}}} 
\newcommand{\bp}{{^{\rm BiP}}} 
\newtheorem{example}{Example}
\newtheorem{prop}{Proposition}
\newtheorem{theorem}{Theorem}
\newtheorem{corr}{Corollary}
\newenvironment{proof}{\noindent {\bf Proof.}}{  {This concludes the proof.~\hfill~$\blacksquare$}}
\newcommand{\myhead}{Biased Communications to Trusting and Suspicious Voters}
\markboth{\myhead}{\myhead}
\title{The Effect of Biased Communications \\ On Both Trusting and
  Suspicious Voters}
\author{William W. Cohen \\
  Carnegie Mellon University \\
  Department of Machine Learning
  \and David P. Redlawsk\\
  Rutgers University \\
  Department of Political Science
  \and Douglas Pierce\\
  Rutgers University \\
  Department of Political Science
}
\begin{document}

\maketitle

\abstract{In recent studies of political decision-making, apparently
  anomalous behavior has been observed on the part of voters, in which
  negative information about a candidate strengthens, rather than
  weakens, a prior positive opinion about the candidate.  This
  behavior appears to run counter to rational models of decision
  making, and it is sometimes interpreted as evidence of non-rational
  ``motivated reasoning''. We consider scenarios in which this effect
  arises in a model of rational decision making which includes the
  possibility of deceptive information.  In particular, we will
  consider a model in which there are two classes of voters, which we
  will call \trm{trusting voters} and \trm{suspicious voters}, and two
  types of information sources, which we will call \trm{unbiased
    sources} and \trm{biased sources}.  In our model, new data about a
  candidate can be efficiently incorporated by a trusting voter, and
  anomalous updates are impossible; however, anomalous updates can be
  made by suspicious voters, if the information source mistakenly
  plans for an audience of trusting voters, and if the partisan goals
  of the information source are known by the suspicious voter to be
  ``opposite'' to his own.  Our model is based on a formalism
  introduced by the artificial intelligence community called
  ``multi-agent influence diagrams'', which generalize Bayesian
  networks to settings involving multiple agents with distinct goals.
}

\section{Introduction}

Historically, political decision-making has been modeled in a number
of ways.  Models that propose rational decision making on the part of
voters must account for the fact that voters frequently have
difficulty in responding to factual surveys on political issues.  One
resolution to this difficulty is to model candidate evaluation as an
online learning process, in which a tally representing candidate
affect is incremented in response to external information
\cite{Lodge1989}, after which the information itself is discarded.
However, in a number of recent studies of political decision-making,
apparently anomolous behavior has been observed on the part of voters,
in which \trm{negative} information about a candidate $k$ strengthens
(rather than weakens) a prior positive opinion held about $k$
\cite{civettini_voters_2009,redlawsk_hot_2002} . 

This behavior appears to run counter to rational models of decision
making, and it is sometimes interpreted as evidence of non-rational
``motivated reasoning'' \cite{Kunda_1990}.  In motivated reasoning
models, a voter will (1) evaluate the affect of new information—i.e.,
its positive or negative emotional charge, then (2) compare this to
the affect predicted by current beliefs, and finally (3) react, where
\trm{congruent} information (i.e., information consistent with
predicted affect) is processed quickly and easily, and incongruent
information is processed by a slower “stop-and-think”
process. “Stop-and-think” processing may include steps such as
counter-arguing, discounting the validity of the information, or
bolstering existing affect by recalling previously-assimilated
information \cite{Lodge_Taber_2000,redlawsk_hot_2002}.

Some evidence for the motivated reasoning hypothesis comes from
hman-subject experiments using a \trm{dynamic process tracing
  environment} (DPTE), in which data relevant to a mock election is
presented as a dynamic stream of possibly relevant news items.  In
DPTE experiments, detailed hypotheses about political reasoning can be
tested, for instance by varying the frequency and amount of
incongruent information presented to voters in the mock election.
Experimental evidence shows, for instance, that both political
sophisticates and novices spend more time processing negative
information about a liked candidate, and novices also spend longer
processing positive information about a disliked candidate
\cite{redlawsk_hot_2002}. Most intriguingly, small to moderate amounts
of incongruent information---e.g., negative information about a liked
candidate---actually reinforce the prior positive view of the
candidate \cite{redlawsk_tipping_2010}.

This apparently anomalous effect---whereby information has the inverse
of the expected impact on a voter---appears to be inconsistent with
rational decision-making.  In this paper, we show analytically that
this ``anomalous'' effect can occur in a model of rational decision
making which includes the possibility of deceptive information.  The
model makes another interesting prediction: it justifies as
computationally effective and efficient a heuristic of pretending to
believe information from a possibly-deceptive source if that source's
political preferences are the same as the voters.

In particular, we will consider a model in which there are two classes
of voters, which we will call \trm{trusting voters} and
\trm{suspicious voters}, and two types of information sources, which
we will call \trm{unbiased sources} and \trm{biased sources}.
Information from an unbiased source is modeled simply as data $D_k$
that probabilistically inform a voter about a candidate $k$'s
positions.  \trm{Trusting voters} are voters that treat information
about a candidate as coming from an unbiased source.  We show that, in
our model, new data about a candidate can be efficiently incorporated
by a trusting voter, and anomalous updates (in which ``negative''
information increases support) are impossible.

\trm{Biased sources} are information sources $j$ who plan their
communications with a goal in mind (namely, encouraging trusting
voters to vote in a particular way).  To do this, $j$ will access some
data $C_{k}$ which is communicated to them only (not to voters
directly) and release some possibly-modified version $B_{k}$ of
$C_{k}$, concealing the original $C_{k}$.  $B_{k}$ is chosen based on
the utility to $j$ of the probable effect of $B_{k}$ on a trusting
voter $i$.

We then introduce \trm{suspicious voters}.  Unlike trusting voters,
who behave as if communications were from unbiased sources, suspicious
voters explicitly model the goal-directed behavior of biased sources.

The behavior of rational suspicious voters depends on
circumstances---depending on the assumptions made, different effects
are possible.  If the partisan goals of the biased source $j$ and a
suspicious voter $i$ are aligned, then a suspicious voter can safely
act as if the information is correct---i.e., perform the same updates
as a naive voter.  Intuitively, this is because $j$ is choosing
information $B_k$ strategically to influence a naive voter to achieve
$j$'s partisan goals, and since $i$'s goals are the same, it is
strategically useful for $i$ to ``play along'' with the deception;
this intutition can be supported rigorously in our model.  If the
partisan goals of $j$ are unknown, then a rational suspicious voter
$i$ may discount or ignore the information $B_k$; again, this
intuition can be made rigorous, if appropriate assumptions are made.
Finally, if the partisan goals of $j$ are known to be ``opposite''
those of $i$, then a rational suspicious voter may display the
``anomalous'' behavior discussed above: information $B_k$ that would
cause decreased support for a suspicious voter will cause increased
support for $i$.  Intuitively, this occurs because $i$ recognizes that
$j$ may be attempting to decrease support for candidate $k$, and since
$i$ and $j$ have ``opposite'' partisan alignments, it is rational for
$i$ to instead increase support.

In short, in this model, a negative communication about $k$ can have
the effect opposite to one's initial expectation; however, the
apparent paradox is not due to motivated reasoning, but simply to
imprecise planning on the part of $j$.  In particular, $j$'s
communication was planned by a biased source with the aim of
influencing trusting voters, while in fact, $i$ is a suspicious voter.

Below, we will first summarize related work, and then flesh out these
ideas more formally.  Our model will be based on a formalism
introduced by the artificial intelligence community called
``multi-agent influence diagrams'', which generalize Bayesian networks
to settings involving multiple agents with distinct goals.

\section{Related work} \label{sec:related}

This work is inspired for recent work on motivated reasoning and hot
cognition in political contexts (for recent overviews, see
\cite{graber_political_2005,redlawsk_feeling_2006}).  There is strong
experimental evidence that information processing of political
information involves emotion, and there recent research has sought to
either collect empirical evidence for
\cite{redlawsk_hot_2002,civettini_voters_2009}, and and build models
that explicate \cite{lodge_first_2006}, the mechanisms behind this
phenomenon.

The models of this paper are not intended to dispute role of emotion
in political decision making.  Indeed, our models reflect situations
in which one party deliberately withholds or distorts information to
manipulate a second party, and introspection clearly suggests that
such situations will typically invoke an emotional response.  However,
work in social learning (e.g., \cite{rendell_why_2010}) and
information cascades (e.g., \cite{salganik_experimental_2006}) shows
that behaviors (such as ``following the herd'') which appear to be
driven by non-rational emotions may in fact be strategies that are
lead to results that are evolutionarily desirable (if not always
``rational'' from the individual's perspective.)  Hence, the
identification of emotional aspects to decision-making does not
preclude rational-agent explanations; rather, it raises the question
of \emph{why} these mechanisms exist, what evolutionary pressures
might cause them to arise, and whether or not those pressures are
still relevant in modern settings.  This paper makes a step toward
these long-term goals by identifying cases in which behavior
explainable by motivated reasoning models is also rational, for
instance in the result of Theorem~\ref{thm:motivated}.

The explanation suggested here for anomalous, motivated-reasoning-like
updating is based on a voter recognizing that a source may be biased,
and correcting for that bias.  While this explanation of anomalous
updates is (to our knowledge) novel, it is certainly recognized that
\emph{trust} in the source of information is essential in political
persuasion, and that a voter's social connextions strongly influences
political decision-making (e.g., \cite{beck_social_2002}).  More
generally, empirical studies of persuasion substantiate a role of
confirmatory bias and prior beliefs \cite{dellavigna_persuasion_2010}, and show that in
non-political contexts (e.g., in investing), sophisticated consumers
of information adjust for credibility, while inexperienced agents
under-adjust.

The results of this paper are also related to models of media
choice---for instance, research in which the implications of a
presumed tendency of voters to seek confirmatory news is explored
mathematically
\cite{burke_introductions_2007,bernhardt_political_2008,duggan_spatial_2011,stone_ideological_2011}.
Other analyses show why preferences for unbiased news lead to economic
incentives to distort the news \cite{burke_unfairly_2007}.  This paper
does not address these issues, but does contribute by providing a
rational-agent model for why such a confirmatory bias exists: in
particular, Proposition~\ref{prop:same-alignment-utility} describes
a strategy from using information from biased information sources with
similar preferences as a voter. We notice that this strategy is both
simple and computationally inexpensive, and might be preferable on
these grounds to more complex strategies to ``de-noise'' biased
information from sources with unknown preferences.

A further connection is to formal work on ``talk games'', such as
Crawford and Sobel's model of strategic communication
\cite{crawford_strategic_1982}.  In this model, a well-informed but
possibly deceptive ``sender'' sends a message to a ``receiver'' who
(like our voter) takes an action that affects both herself and the
sender.  Variants of this model have explored cases in which
information can only be withheld or disclosed, and disclosed
information may be verified by the receiver
\cite{milgrom_relying_1986}; cases where the receiver uses approximate
``coarse'' reasoning \cite{mullainathan_coarse_2008}; and cases where
there is a mixed population of strategic and naive recievers, all of
whom obtain information from senders acting strategically
\cite{ottaviani_naive_2006}.

The analysis goals in ``talk games'' is different from the goals of
this paper.  whereas we investigate whether specific counter-intuitive
observed behavior can arise in a plausible (not necessarily temporally
stable) situation, this prior work primarily nalyzes the communication
efficiency of a a system {in equilibrium}, Some of the results
obtained for talk games are reminiscent of results shown here: for
instance, Crawford and Sobel show that in equilibrium, signaling is
more informative when the sender and reciever's preferences are more
similar.  However, other results are less intuitive: for instance, in
some models there is no deception at equilibrium
\cite{ottaviani_naive_2006}.  We note that while equilibria are
convenient to analyze, there is no particular reason to believe that
natural political discourse reaches an equilibria.

Game theory has a long history in analyses of politics; in particular,
writing in 1973, Shubik discusses possible applications of game theory
to analysis of misinformation \cite{shubik_game_1973}.  The tools used
used in this paper arose from more recent work in artificial
intelligence \cite{koller_multi-agent_2003,pfeffer_networks_2008},
specifically analysis of multi-agent problem solving tasks, in which
one agent explicitly models the goals and knowledge of another in
settings involving probabilistic knowledge.  One small contribution of
this paper is introduction of a new set of mathematical techniques,
which (to our knowledge) have not been previously used for analysis of
political decision-making.  We note however that while these tools are
convenient, they are not absolutely necessary to obtain our results.

\section{Modeling trusting voters}

\subsection{A model}

\begin{table}

\yy{to check: subscripts for $C,B,R$}

\begin{tabular}{ll}
\hline
$i$		& a voter \\
$j$		& a pundit \\
$k$             & a candidate \\             
$T_i,T_j$       & ``target positions'' for voter $i$ and source $j$ \\
$T_k$           & the position of a candidate $k$\\
$dom(T)$        & the set of values taken on by random variable $T$\\
$S_{ik},S_{jk}$  & similarity of a target position and a candidate \\
$D_{k}$	        & data about candidate $k$\\
$Y_{ik}$ 	& the vote of $i$ for $k$\\
$U_i$ 	        & utility function for $i$\\
\hline
$C_{jk}$	& data about $k$ that is known only to $j$\\
$B_{jk}$	& biased variant of data $C_{jk}$ about $k$ that has been modified by $j$\\
$R_{jk}$	& reputational cost to pundit $j$ of modifying $C_{jk}$ to $B_{jk}$\\
\hline
$P\tv(X|Y)$     & a conditional probability computed using the trusting voter model\\
$P\bp(X|Y)$     & a conditional probability computed using the biased pundit model\\
$P\sv(X|Y)$     & a conditional probability computed using the suspicious voter model\\
\hline
\end{tabular}
\caption{Notation used in the Paper}
\label{tab:notation}
\end{table}

\begin{figure}
\centerline{\includegraphics[width=0.7\textwidth]{./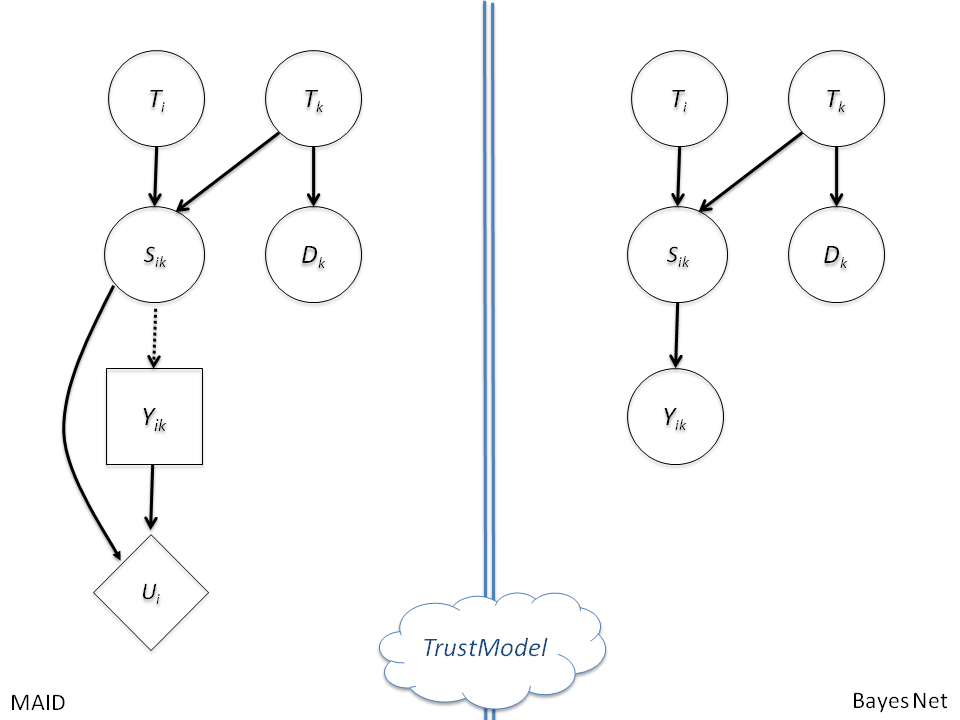}}
\caption{Model of a trusting voter, in MAID and Bayes Net notation}
\label{fig:trusting-voter-model}
\end{figure}

Consider the influence diagram on the left-hand side of
Figure~\ref{fig:trusting-voter-model}.  In this model $i$ is a voter,
and $k$ is a candidate.  A voter $i$ has a preference $T_i$: for
example, $T_i$ might be a member of the set
\[ dom(T) = \{ {\rm goodLiberal}, {\rm evilLiberal}, {\rm goodConservative}, {\rm evilConservative} \}
\] 
Likewise $T_k$ is candidate $k$'s actual position, which is also an
element of $dom(T)$.  $S_{ik}$ measures how similar two positions are.
$Y_{ik}$ is a measure of $i$'s support for $k$, which is chosen by
voter $i$ to maximize $i$'s utility.  The utility $U_i$ for voter $i$
is a function of how appropriate $Y_{ik}$ is given $S_{ik}$.  Finally,
$D_{k}$ is some data about $k$, generated probabilistically according
to the value of $T_k$.  As an example, this might be a statement made
by $k$.  The notation used in this diagram (and elsewhere below) is
summarized in Table~\ref{tab:notation}.

The shapes in the nodes in the diagrams indicate the type of the
variable: diamonds for utilities, circles for random variables, and
squares for \textit{decision variable}, which an agent (in this case,
voter $i$) will choose in order to maximize utility.  The dotted arrow
lines leading into a decision variable indicates information available
when the decision is made.  The arrows leading into a random variable
node indicate ``parents''---variables on which the value of the
variable is conditioned. This sort of diagram is called an influence
diagram, and the general version we will use later (in which multiple
agents may exist) is called a MAID (Multi-Agent Influence Diagram)
\cite{pfeffer_networks_2008,koller_multi-agent_2003}.

More precisely the model defines a probability distribution
generated by this process:
\begin{itemize}
\item Pick $t_i \sim P(T_i)$, where $P(T_i)$ is a prior on voter preferences.
\item Pick $t_k \sim P(T_k)$, where $P(T_k)$ is a prior on candidate positions.
\item Pick $d_k \sim P(D_k|T_k=t_k)$. Equivalently, we could let $d_k=f_D(t_k,\epsilon_D)$, where $f_D$ is a 
function and $\epsilon_D$ is a random variable chosen independently of all other random variables in the model.
\item Pick $s_{ik} \sim P(S_{kij}|T_k=t_k,T_j=t_j)$. Equivalently, we could let $s_{ik}=f_S(t_i,t_k,\epsilon_S)$, where $f_S$ is a 
function and $\epsilon_S$ is a random variable, again chosen independently of all other random variables in the model.
\item Allow voter $i$ to pick $y_{ik}$, based on a user-chosen probability distribution
  $P_\tau(Y_{ik}|S_{ik}=s_{ik})$, or equivalently computed using $f_Y(s_{ik},\epsilon_Y)$.
\item Pick utility $u_i$ from $P(U|y_{ik},s_{ik})$---or
  equivalently, computed as $u_i = f_U(y_{ik},s_{ik},\epsilon_U)$.
\end{itemize}
The user can choose any distribution $P_\tau(Y|S)$, but we will
henceforth assume that she will make the optimal choice---i.e., the
probability distribution $P_\tau$ will be chosen by $i$ to maximize
the expected utility $u_i$, where $u_i$ will be picked from
$P(U|y_{ik},S_{ik})Pr(S_{ik})$---or equivalently computed as
$u_i=\sum_{s'} P(S_{ik}=s') * f_U(y_{ik},s',\epsilon_U)$.  In this
specific case, the conversion is based on the observation that
\begin{equation} \label{eq:trust}
 P\tv(Y_{ik}=y|S_{ik}=s) = P\left(y=\argmax_{y'} \int_{\epsilon_U} f_u(y',s,\epsilon_U) d\epsilon_U\right)
\end{equation}
and yields the Bayes network on the righthand side of
Figure~\ref{fig:trusting-voter-model}. Here $Y_{ik}$ is simply a
random variable conditioned on $S_{ik}$, where the form of the
dependency depends on Equation~\ref{eq:trust}.  Notice that the link
from $S_{ik}$ to $Y_{ik}$ is deterministic.  We call this the
\trm{trusting voter model}, since voter $i$ trusts the validity of the
information $D_k$.

MAIDs (and their single-agent variants, influence diagram networks)
have a number of advantages as a formalism.  They provide a compact
and natural computational representation for situations which are
otherwise complex to describe - in particular, situations in which
agents have limited knowledge of the game structure, or mutually
inconsistent beliefs, but act rationally in accordance with these
beliefs.  In particular, MAIDs relax the assumption usually made in
Bayesian games that players' beliefs are consistent, and supports an
explicit process in which one player can model another player's
strategy.  MAIDs also support an expressive structured representation
for a player's beliefs.  Together these features make them appropriate
for modeling ``bounded rationality'' situations of the sort we
consider here.  Further discussion of MAIDs, and their formal relation
to other formalisms for games and probability distributiobns, can be
found elsewhere \cite{koller_multi-agent_2003,pfeffer_networks_2008}.

Next, we will explore some simplifications of Eq.~\ref{eq:trust}) If
$f_u$ is deterministic, then Eq.~\ref{eq:trust} simplifies to the
following choice of $y_{ik}$ given $s_{ik}$:
\[
y_{ik} = f_y\tv(s_{ik}) = \argmax_{y'} f_u(y',s_{ik})
\]
If only a distribution $P(S_{ik}=s)$ is known, then voter $i$'s
optimal strategy is to let
\[
y_{ik} = \argmax_{y'} \sum_s f_u(y',s) P(S_{ik}=s) 
\]

In any case, however, the model has similar properties: once we assume
that $i$ uses an optimal strategy, then the MAID becomes an ordinary
Bayes net, defining a joint distribution over the variables $T_i$,
$T_k$, $S_{ik}$, and $Y_{ik}$.  We will henceforth use $P\tv$ to
denote probabilities computed in this model, and reserve the
non-superscripted $P(A|B)$ and $P(A)$ for conditional (respectively
prior) probability distribution that are assumed to be available as
background information.

\begin{table}[tb]
\begin{small}
\begin{tabular}[t]{|l|r|}
\hline
$d$  & $P(T_i=d)$ \\
\hline
goodLiberal & 0.4 \\
goodConserv & 0.4 \\
evilLiberal & 0.1 \\
evilConserv & 0.1 \\
\hline
\multicolumn{2}{c}{~} \\
\hline
$d$  & $P(T_k=d)$ \\
\hline
goodLiberal & 0.29 \\
goodConserv & 0.69 \\
evilLiberal & 0.01 \\
evilConserv & 0.01 \\
\hline
\end{tabular}~\begin{tabular}[t]{|ll|r|}
\hline
$t_i$            & $t_k$       & $s_{ik}|t_i,t_k$ \\
\hline
goodLiberal      & goodLiberal &  $5 + \epsilon$ \\
goodLiberal      & goodConserv &  $1 + \epsilon$ \\
goodLiberal      & evilLiberal & $-2 + \epsilon$ \\
goodLiberal      & evilConserv & $-5 + \epsilon$ \\
goodConserv      & goodConserv &  $5 + \epsilon$ \\
goodConserv      & evilLiberal & $-5 + \epsilon$ \\
goodConserv      & evilConserv & $-2 + \epsilon$ \\
evilLiberal      & evilLiberal &  $5 + \epsilon$ \\
evilLiberal      & evilConserv & $-5 + \epsilon$ \\
evilConserv      & evilConserv &  $5 + \epsilon$ \\
\hline
\end{tabular}~~\begin{tabular}[t]{|ll|l|}
\hline
$t_k$ & $c_k$ & $P(c_k|t_k)$ \\
\hline
goodLiberal & safety-net & 0.4 \\
goodLiberal & motherhood & 0.6 \\
goodConserv & guns       & 0.3 \\
goodConserv & motherhood & 0.7 \\
evilLiberal & safety-net & 0.9 \\
evilLiberal & chthulu    & 0.1 \\
evilConserv & guns       & 0.8 \\
evilConserv & chthulu    & 0.2 \\ 
\hline
\multicolumn{3}{c}{ ~ } \\
\multicolumn{3}{c}{ $P(Y_i=1|S_{ik}=s)$: see Eq~\ref{eq:trust} }
\end{tabular}
\end{small}
\caption{Conditional probability tables (CPT) for a sample example of
  the trusting voter model.  In the table for $s_{ik}|t_i,t_k$,
  $\epsilon$ is drawn unformly from the set $\{-1,0,+1\}$ and for
  pairs $t_i,t_k$ not shown, $s_{ik}|t_i,t_k=s_{ik}|t_k,t_i$.}
\label{tab:cpt}
\end{table}

\begin{example}
To take a concrete example, igure~\ref{tab:cpt} shows the conditional
probability tables for a small example, where candidates and target
positions have values like \textit{evilLiberal} and
\textit{goodConserv}, and the value of a communication $C_k$ is a name
of something that a candidates might support (e.g.,\textit{motherhood}
or \textit{guns}).
\end{example}

\subsection{Implications of the model}

\subsubsection{New information about a candidate is easy to process}
\label{sec:trusting-inference}

It is obvious how to use the trusting voter model to compute $Y_{ik}$
if $T_k$ and $T_i$ are known.  However, a more reasonable situation is
that $D_k$ and $T_i$ are known to $i$, but the true position of the
candidate can only be inferred, indirectly, from $D_k$.  Fortunately,
using standard Bayes network computations, we can also easily compute
a distribution over $Y_{ik}$ given $D_k$.

First, note that we can marginalize over $T_k$ and compute
\[ P(D_k=d) = {\sum_{t'}P(D_k=d|t')P(T_k=t')}
\]
and that using Bayes' rule 
\begin{eqnarray}  \label{eq:tk-from-d}
P(T_k=t|D_k=d)  =  \frac{P(D_k=d|T_k=t)P(T_k=t)}{P(D_k=d)} = \frac{P(D_k=d|T_k=t)}{P(D_k=d)} P(T_k=t)
\end{eqnarray} 
This gives a simple rule for $i$ to use in choosing her vote $Y_{ik}$ given $D_k$:
\begin{eqnarray} 
\lefteqn{P\tv(Y=y|T_i=t_i,D_k=d_k)} & & \label{eq:y-from-tk}\\
   & = & \sum_{s}P\tv(Y=y|S_{ik}=s)\sum_{t}P(S_{ik}=s|T_i=t_i,T_k=t)P(T_k=t|D_k=d_k) \\
   & = & P\tv(Y=y|S_{ik}) P(S_{ik}|t_i,T_k)P(T_k|d_k) \nonumber
\end{eqnarray} 
The last line uses a simplified notation from the Bayes net community,
where sums used to marginalize are omitted, and the event $X=x$ is
replaced with $x$ when the variable $X$ is clear from context.

\begin{figure}
\centerline{\includegraphics[width=0.7\textwidth]{./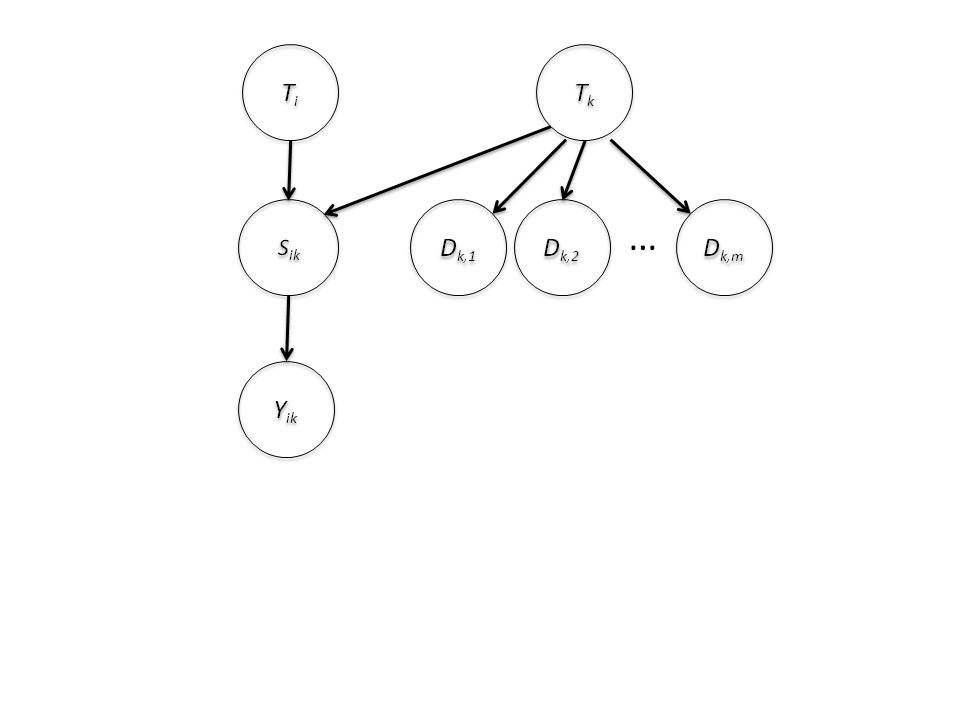}}
\caption{Model of a trusting voter with $m$ multiple independent data items about the candidate, in Bayes Net notation}
\label{fig:mds-trusting-voter-model}
\end{figure}

This procedure can be extended easily to the case of multiple
data items $D_{k,1},\ldots,D_{k,m}$ about the candidate, each
independently generated from $P(D_k|T_k)$, as shown in
Figure~\ref{fig:mds-trusting-voter-model}.  It can be shown that
\[
P(T_k=t|d_{k,1},\ldots,d_{k,m})  =  
    \prod_{\ell}\frac{P(d_{k,\ell}|T_k=t)}{P(d_{k,\ell)}} P(T_k=t)
\]
Put another way, we can define $P(T_k=t|d_{k,1},\ldots,d_{k,m})$ recursively as follows:
\[
P(T_k=t|d_{k,1},\ldots,d_{k,m})  =  
     \frac{P(d_{k,m}|T_k=t)}{P\tv(d_{k,m)}} \cdot P(T_k=t|d_{k,1},\ldots,d_{k,m-1})
\]
Hence, voter $i$ can quickly update beliefs about $T_k$
\textit{incrementally} with each new piece of information $d_{k,\ell}$
and then (as before) use Equation~\ref{eq:y-from-tk} to update her
vote.

This incremental update property is worth emphasizing---while it may
be complicated (if not computationally complex) to compute
$P\tv(Y|S)$, or it may be difficult for a voter to establish her
preferences $t_i$, absorbing new information in the trusting voter
model is straightforward, and consists of two ``natural'' steps:
estimating the candidate's position $T_k$ given the information
$d_{k,\ell}$; and then updating her support $y_{ik}$ for candidate $k$,
given the updated estimate of the candidate's position.

\subsubsection{Positive information increases support}

In a number of recent studies of political decision-making,
\trm{negative} information about a candidate $k$ has been observed to
strengthen (rather than weaken) a voter's support for $k$.  We will
show that under fairly reasonable assumptions this effect can {\em
  not} occur with the trusting voter model.  In particular, we will
assume that support $Y_{ik}$ increases monotonically with the
similarity $S_{ik}$ of the candidate's position $T_k$ and the voter's
preference $T_i$.  

Recall that in the trusting voter model $Y_{ik}$ is a deterministic
function of $S_{ik}$, defined as
\begin{equation} 
f_Y\tv(s_{ik}) = \argmax_{y'} \int_{\epsilon_U} f_u(y',s,\epsilon_U) d\epsilon_U
\end{equation}
If $f_Y\tv$ has the property that 
\[ \forall s_1>s_2, f_Y\tv(s_1)\geq{}f_y\tv(s_2)
\]
then we will say that \textit{$i$'s support for $k$ increases
  monotonically with $s_{ik}$}.  This is one assumption needed for our
result.

We also need to precisely define ``negative'' information.  We say
that $d_k$ is \textit{strictly negative about $k$ to voter $i$} if
there is some partition of $dom(S_{ik})$ into triples
$(a_1,b_1,\delta_1)$, \ldots,$(a_m,b_m,\delta_m)$ so that
\begin{itemize}
\item For every triple $(a_\ell,b_\ell,\delta_\ell)$, $a_\ell<b_\ell$,
  $\delta_\ell>0$, $P(S_{ik}=a_\ell|d_k) = P(S_{ik}=a_\ell) +
  \delta_\ell$, and $P(S_{ik}=b_\ell|d_k) = P(S_{ik}=b_\ell) -
  \delta_\ell$.  In other words, learning $D_k=d_k$ shifts some
  positive probability mass $\delta_\ell$ from the larger similarity
  value $b_\ell$ to the smaller similarity value $a_\ell$.
\item For all $s\in dom(S_{ik})$ that are not in any triple,
  $P(s_{ik}=s|d_k)=P(s_{ik}=s)$.  In other words, the probability mass
  of values $s$ not in any triples is unchanged.
\end{itemize}

\begin{figure}
\centerline{\includegraphics[width=0.7\textwidth]{./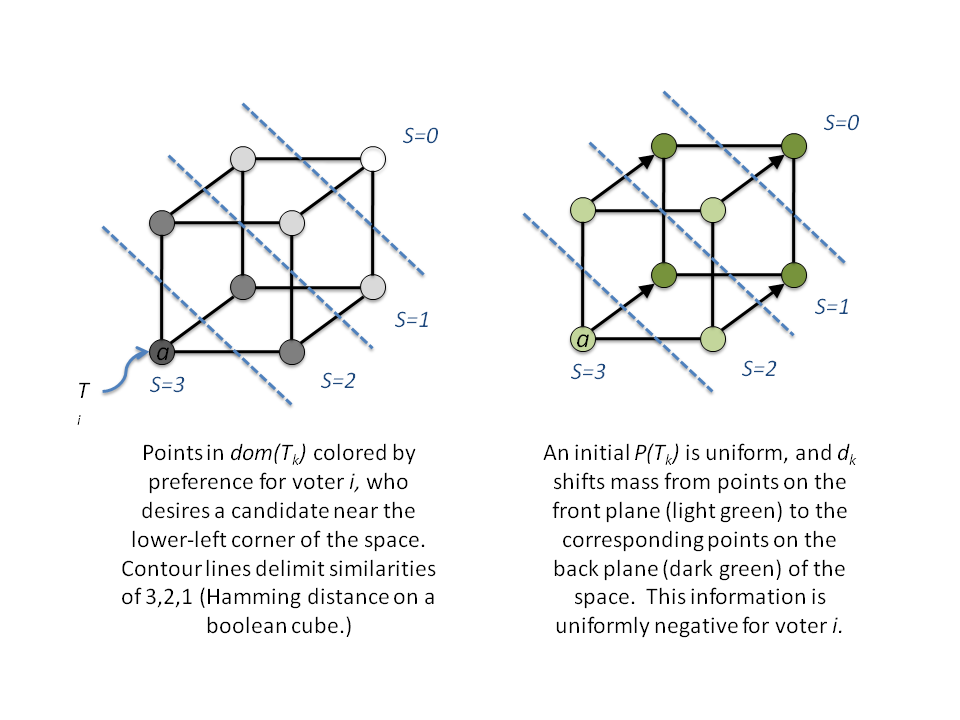}}
\caption{An illustration of strictly negative information for a voter $i$}
\label{fig:neg-info}
\end{figure}

To illustrate this, consider Figure~\ref{fig:neg-info}, which
illustrates a plausible example of strictly negative information.
Strictly positive information is defined analogously.

We can now state formally the claim that negative information will
reduce support.  An analogous statement holds for strictly positive
information.

\begin{theorem} \label{thm:strictly-neg}
Let $E_P[X]$ denote the expected value of $X$ in probability
distribution $P$.  In a trusted voter model $P\tv$, if voter $i$'s
support for $k$ increases monotonically with $s_{ik}$ and $d_k$ is
strictly negative about $k$ to voter $i$, then
$E_{P\tv}[Y_{ik}|D_k=d_k] < E_{P\tv}[Y_{ik}]$.
\end{theorem}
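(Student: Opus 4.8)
The plan is to exploit the fact that in the trusting voter model $Y_{ik}$ is a \emph{deterministic} function $f_Y\tv$ of $S_{ik}$, so that both expectations appearing in the claim are just weighted sums of the values $f_Y\tv(s)$. Writing $p(s) = P(S_{ik}=s)$ and $q(s) = P(S_{ik}=s\mid D_k=d_k)$, we have
\begin{eqnarray*}
E_{P\tv}[Y_{ik}\mid D_k=d_k] - E_{P\tv}[Y_{ik}] &=& \sum_{s\in dom(S_{ik})} f_Y\tv(s)\,(q(s)-p(s)) .
\end{eqnarray*}
The definition of ``strictly negative about $k$ to voter $i$'' pins down the signed quantity $q-p$ exactly: it is zero off the triples, equals $+\delta_\ell$ at each $a_\ell$, and equals $-\delta_\ell$ at each $b_\ell$. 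Substituting this in collapses the sum to one contribution per triple:
\begin{eqnarray*}
\sum_{s\in dom(S_{ik})} f_Y\tv(s)\,(q(s)-p(s)) &=& \sum_{\ell} \delta_\ell\,(f_Y\tv(a_\ell) - f_Y\tv(b_\ell)) .
\end{eqnarray*}

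Next I would invoke the monotonicity hypothesis: since $a_\ell < b_\ell$ in every triple, $f_Y\tv(a_\ell) \le f_Y\tv(b_\ell)$, and since $\delta_\ell > 0$, each summand $\delta_\ell\,(f_Y\tv(a_\ell)-f_Y\tv(b_\ell))$ is $\le 0$; hence the whole difference is $\le 0$. This already yields the weak form $E_{P\tv}[Y_{ik}\mid D_k=d_k] \le E_{P\tv}[Y_{ik}]$, and the statement for strictly positive information is the mirror image (interchange the roles of the $a_\ell$ and $b_\ell$, equivalently flip the sign of $q-p$).

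The delicate point --- and the step I expect to be the real obstacle --- is the \emph{strict} inequality asserted. Weak monotonicity of $f_Y\tv$ alone does not exclude $f_Y\tv(a_\ell)=f_Y\tv(b_\ell)$ for every triple; indeed, if $f_Y\tv$ is constant on the support of $S_{ik}$ then equality genuinely holds and the claim fails as literally written. To recover ``$<$'' one needs a nondegeneracy ingredient: either strengthen ``$i$'s support for $k$ increases monotonically'' to \emph{strict} monotonicity on $dom(S_{ik})$, or build into the notion of ``strictly negative'' the requirement that at least one triple $(a_\ell,b_\ell,\delta_\ell)$ with $\delta_\ell>0$ have $f_Y\tv(a_\ell)<f_Y\tv(b_\ell)$. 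Under strict monotonicity the argument closes immediately: assuming, as the term ``strictly negative'' intends, that at least one triple is present, that triple has $\delta_\ell>0$ and $a_\ell<b_\ell$, forcing $f_Y\tv(a_\ell)<f_Y\tv(b_\ell)$, so its contribution is strictly negative while every other contribution is $\le 0$; summing gives the strict inequality.
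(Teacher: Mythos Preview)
Your argument is essentially the same as the paper's: both reduce $E_{P\tv}[Y_{ik}\mid D_k=d_k]$ to a sum of $f_Y\tv(s)$ against the posterior on $S_{ik}$, use the definition of ``strictly negative'' to isolate the contribution of the triples as $\sum_\ell \delta_\ell\,(f_Y\tv(a_\ell)-f_Y\tv(b_\ell))$, and then invoke monotonicity together with $a_\ell<b_\ell$, $\delta_\ell>0$. Your version is marginally cleaner because you work with the difference $q-p$ directly rather than treating the three pieces of the sum separately, but the content is identical.

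Your concern about the strict inequality is well placed, and in fact the paper's own proof has the same defect: its final displayed chain ends with ``$\leq$'', not ``$<$'', so as written it too only establishes $E_{P\tv}[Y_{ik}\mid D_k=d_k]\le E_{P\tv}[Y_{ik}]$. The theorem as stated does not follow from weak monotonicity of $f_Y\tv$ alone (a constant $f_Y\tv$ is a counterexample), so some nondegeneracy assumption of the kind you suggest is genuinely needed. You have not introduced a gap; you have noticed one.
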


\begin{proof}
Let $S'=dom(S_{ik})-\{a_1,b_1,\ldots,a_m,b_m\}$, where the $a_\ell,b_\ell$'s are
the triples guaranteed by the strictly-negative property of $d_k$.
\begin{eqnarray*}
\lefteqn{E_{P\tv}[Y_{ik}|D_k=d_k]} && \\
   & = & \sum_y y \cdot P\tv(Y_{ik}=y|D_k=d_k) \\
   & = & \sum_s f_y\tv(s)P\tv(S_{ik}=s|D_k=d_k) \\
   & = & \sum_{s'\in{}S'} f_y\tv(s)P\tv(S_{ik}=s|d_k) 
       + \sum_{\ell=1}^m f_y\tv(a_\ell)P\tv(S_{ik}=a_\ell|d_k)  
       + \sum_{\ell=1}^m f_y\tv(b_\ell)P\tv(S_{ik}=b_\ell|d_k)  
\end{eqnarray*}
Looking at the three terms of the final sum in turn, clearly
\[ \sum_{s'\in{}S'} f_y\tv(s)P\tv(S_{ik}=s|d_k) = \sum_{s'\in{}S'} f_y\tv(s)P\tv(S_{ik}=s) 
\]
and the last two terms can be written as
\begin{eqnarray*}
 &    & \sum_{\ell=1}^m f_y\tv(a_\ell)P\tv(S_{ik}=a_\ell|d_k) + f_y\tv(b_\ell)P\tv(S_{ik}=b_\ell|d_k) \\
 &=   & \sum_{\ell=1}^m f_y\tv(a_\ell)\left(P\tv(S_{ik}=a_\ell) + \delta_\ell \right) + f_y\tv(b_\ell)\left( P\tv(S_{ik}=b_\ell) - \delta_\ell \right) \\
 &=   & \sum_{\ell=1}^m f_y\tv(a_\ell) P\tv(S_{ik}=a_\ell) + f_y\tv(b_\ell) P\tv(S_{ik}=b_\ell)
                 + \delta_\ell (  f_y\tv(a_\ell) -  f_y\tv(b_\ell) ) \\ 
 &\leq& \sum_{\ell=1}^m f_y\tv(a_\ell) P\tv(S_{ik}=a_\ell) + f_y\tv(b_\ell) P\tv(S_{ik}=b_\ell)
\end{eqnarray*}
with the last step holding because (a) $\delta_\ell>0$ and (b) $a_\ell<b_\ell$.  (Recall that 
from the monotonicity of $f_y$, if $a_\ell<b_\ell$ then $f_y\tv(a_\ell) \leq{}  f_y\tv(b_\ell)$).
Combining these gives that 
\begin{eqnarray*}
\lefteqn{E_{P\tv}[Y_{ik}|D_k=d_k]} && \\
   & = & \sum_{s'\in{}S'} f_y\tv(s)P\tv(S_{ik}=s|d_k) 
       + \sum_{\ell=1}^m f_y\tv(a_\ell)P\tv(S_{ik}=a_\ell|d_k)  
       + \sum_{\ell=1}^m f_y\tv(b_\ell)P\tv(S_{ik}=b_\ell|d_k)  \\
  &\leq& \sum_{s'\in{}S'} f_y\tv(s)P\tv(S_{ik}=s) + \sum_{\ell=1}^m f_y\tv(a_\ell) P\tv(S_{ik}=a_\ell) + f_y\tv(b_\ell) P\tv(S_{ik}=b_\ell) \\
  & =  & \sum_{s\in{}dom(S_{ik})} f_y\tv(s)P\tv(S_{ik}=s) = E_{P\tv}[Y_{ik}]
\end{eqnarray*}
\end{proof}

\section{Modeling biased pundits and suspicious voters}

\subsection{Biased pundits}

\begin{figure}
\centerline{\includegraphics[width=0.7\textwidth]{./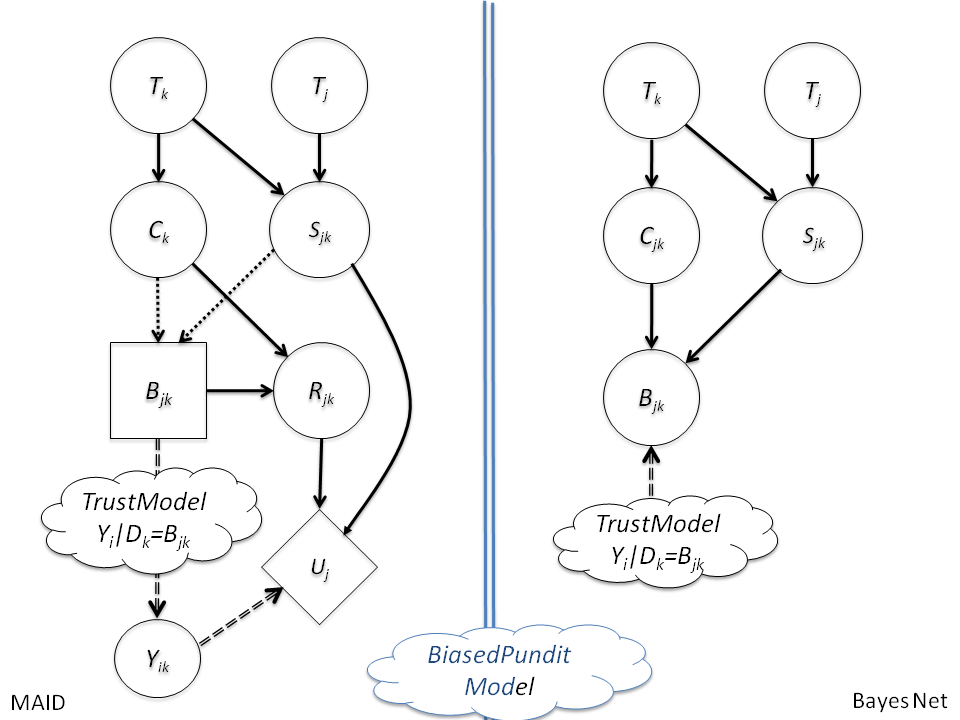}}
\caption{Model of a biased pundit, in MAID and Bayes net notation}
\label{fig:biased-pundit-model}
\end{figure}

We now consider a new model, as shown in
Fig~\ref{fig:biased-pundit-model}.  In this model there is a pundit
$j$, who, like a voter, has a target candidate position $T_j$.  Pundit
$j$ observes a private datapoint $C_k$ from candidate $k$---perhaps
based on a private communication or research---and then publishes a
``biased version'' $B_k$ of $C_k$.  However, $B_k$ is chosen under the
assumption that some trusting voter $i$ will react to $B_k$ {\em as if
  it were $D_k$ in the trusting voter model of
  Figure~\ref{fig:trusting-voter-model}}.  Specifically, we assume
that $B_k$ will provoke a vote $Y_{ik}$ according to the trusting
voter model.  The utility assigned by $j$ to this outcome is a
function of $i$'s vote $Y_{ik}$, the similarity of $S_{jk}$ of $T_k$
to $j$'s target $T_j$, and a ``reputational cost'' $R_{jk}$, which is
a function of $C_k$ and $B_k$. For instance, $R_{jk}$ might be zero if
$b_k=c_k$, and otherwise some measure of how embarrassing it might be
to $j$ if his deception of replacing $c_k$ with $b_k$ were
discovered.

More precisely the model defines a probability distribution
generated by this process:
\begin{itemize}
\item Pick $t_j \sim P(T_j)$, where $P(T_j)$ is a prior on pundit preferences.
\item Pick $t_k \sim P(T_k)$, where $P(T_k)$ is a prior on candidate positions.
\item Pick $c_k \sim P(D_k|T_k=t_k)$, or equivalently, $c_k=f_D(t_k,\epsilon_D)$.  (Notice that we assume 
$c_k$ is chosen from the same conditional distribution $P(D_k|T_k)$ used in the trusting voter model
to chose $D_k$---we're using a different variable here to emphasize the different role it will play.)
\item Allow pundit $j$ to pick $b_{k}$, based some user-chosen distribution $P_\sigma(B_k|C_k=c_k)$.
\item Pick $r_{ik} \sim P(R_{jk}|B_k=b_k,C_k=c_k)$, or equivalently, $r_{jk}=f_R(b_k,c_k,\epsilon_R)$.
\item Show $b_k$ to a trusting voter $i$, presenting it as a sample
  from $P(D_k|T_k=t_k)$, and allow user $i$ to pick $y_i$ according to the
  trusting voter model.
\item Pick $u_{i} \sim
  P(U_{j}|R_{jk}=r_{jk},S_{jk}=s_{jk},Y_{ik}=y_{ik})$, or
  equivalently, let $u_{j}=f_U(r_{jk},s_{jk},y_{jk},\epsilon_U)$.
\end{itemize}
To distinguish the two utility functions, we will henceforth use
$f_U\tv$ for the utility function $f_U(s,y)$ used in the trusting voter model,
and use $f_U\bp$ for the utility function $f_U(r,s,y)$ defined above.
As below, we will assume pundit $j$ will pick $b_{k}$, based on available estimates of
  $S_{ij}$ and knowledge of $C_k$, to maximize the expected utility $u_j$. This can be computed as
\[ u_j = \sum_{r,s,y} P\tv(Y_{ik}=y|D_k=b_k)
      P(S_{jk}=s|D_k=c_k) P(R_{jk}=r|b_k=c_k) f_U\bp(r,s,y,\epsilon_U)
\]
where $P\tv(Y_{ik}|D_k=b_k)$ is estimated using the trusting voter
model; $P(S_{jk}|D_k=c_k)$ is computed as in
Section~\ref{sec:trusting-inference}, using $P(DC_k|T_k)$ and
$P(S_{jk}|T_j,T_k)$; and $P(R_{jk}=r|b_k,c_k)$ is computed using the
given probability function of reputational cost $r$, as a function of
the unbiased data $c_k$ and the biased version $b_k$ that is released.

Since pundit $j$ picks $B_k$ to maximize utility, then as before, we
can convert this MAID to a Bayes net.  Specifically, $B_k$ depends on
the parents $C_k$ and $S_{jk}$ as follows.
\begin{eqnarray*}
\lefteqn{P\bp(B_k=b|C_k=c,S_{jk}=s) = } & &\\
    & & P\left( b=\argmax_{b'} \sum_{r,y} P\tv(Y_{ik}=y|D_k=b_k) P(R_{jk}=r|b',c_k) 
	  \int_{\epsilon_U} f_U\bp(r,s,y,\epsilon_U) d\epsilon_U \right)
\end{eqnarray*}
This can be simplified, if we assume that $f_U\bp$ and $f_R$ are deterministic:
\begin{equation} \label{eq:biased-pundit-model}
P\bp(B_k=b|C_k=c,S_{jk}=s) = 
    P\left( b=\argmax_{b'} \sum_{y} P\tv(Y_{ik}=y|D_k=b_k) f_U\bp(f_R(b',c),s,y)\right)
\end{equation}

\subsection{Suspicious voters}

\begin{figure}
\centerline{\includegraphics[width=0.7\textwidth]{./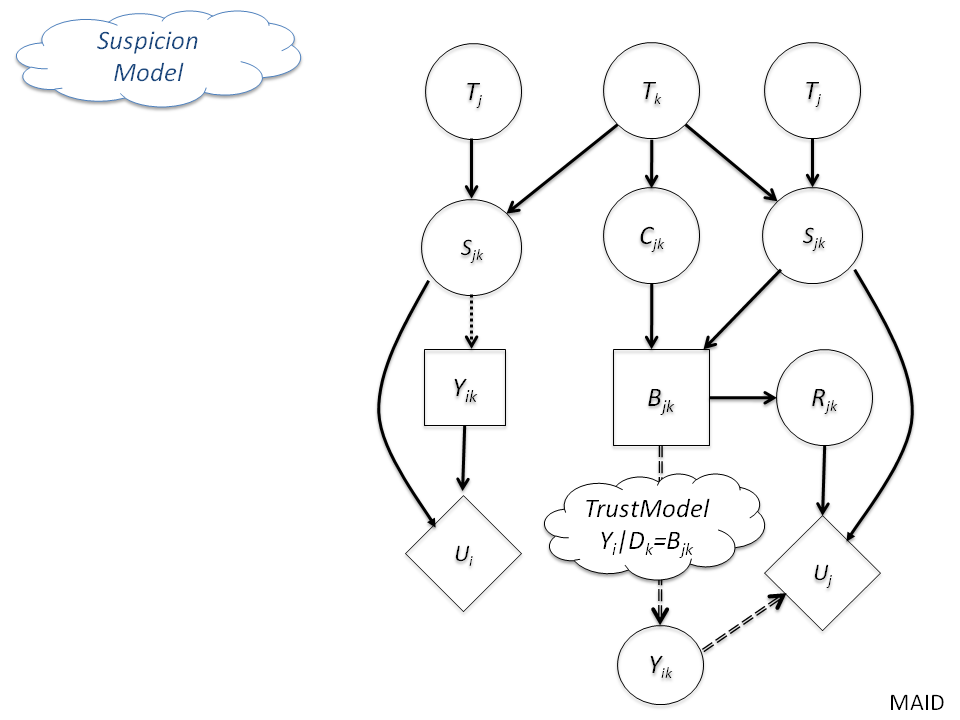}}
\caption{Model of a suspicious voter in MAID notation}
\label{fig:suspicious-voter-model-maid}

\bigskip

\centerline{\includegraphics[width=0.7\textwidth]{./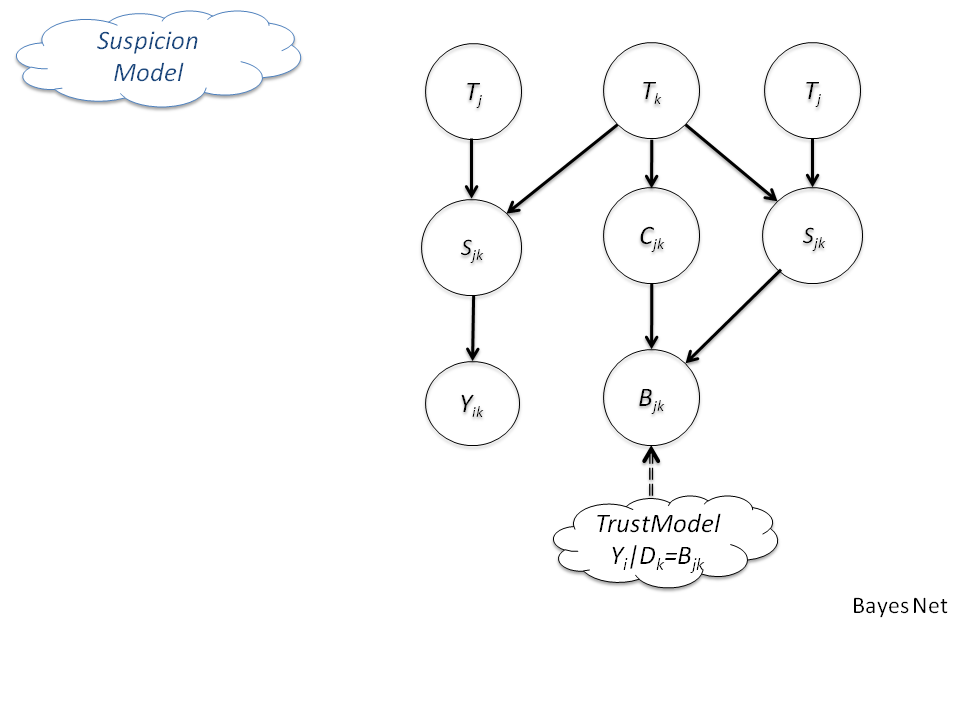}}
\caption{Model of a suspicious voter in Bayes net notation}
\label{fig:suspicious-voter-model-bn}
\end{figure}

Finally, we introduce a model for a ``suspicious voter''.
Intuitively, this model is simple. As before, we assume that $j$
chooses $b_k$ according to the biased pundit model of
Eq.~\ref{eq:biased-pundit-model}---i.e., that $j$ believes $i$ to be a
trusting voter.  The suspicious voter will then attempt to reason with
this correctly, and find the vote $Y_i$ maximizing $i$'s true utility,
given then information revealed by $j$'s choice of $b_k$.  The MAID
and Bayes net versions of this model are shown in
Figure~\ref{fig:suspicious-voter-model-maid} and
Figure~\ref{fig:suspicious-voter-model-bn} respectively, and
probabilities computed in this model will be writted as $P\sv$.

This suspicious voter model does not have a simple closed-form
solution for $P\sv(S)$, as in the trusting voter model.  The
generative process for the suspicious voter model is identical to the
biased pundit model, except that after $b_k$ is chosen according to
Eq.~\ref{eq:biased-pundit-model}, $i$ will pick a value of $Y_{ik}$
from a distribution $P\sv(Y_{ik}|B_k=b_k)$, which is chosen to
maximize the expected value of $u_i$.  We will define
\[
 P\sv(Y_{ik}=y|S_{ik}=s) = P\left(y=\argmax_{y'} \int_{\epsilon_U} f_u(y',s,\epsilon_U) d\epsilon_U\right)
\]
or, assuming determinacy,
\[
y_{ik} = f_y\sv(s_{ik}) = \argmax_{y'} f_u(y',s_{ik})
\]
This leads to a more complex inference problem for voters.  Although
the process of computing $P(S_{ik}|T_i,T_k)$ is unchanged, relative to
the trusting voter model, a suspicious voter cannot estimate a
distribution over $T_{k}$ using $d_{k}$, as in Eq.~\ref{eq:tk-from-d},
because $d_{k}$ is not known.  Instead $i$ only has indirect evidence
about $d_k$ in the form of $b_k$.

However, voter $i$ can use this indirect evidence to compute
\begin{equation} \label{eq:tk-from-bk}
P\sv(T_k|B_k=b_k) = \sum_c P\tv(T_k|D_k=c) \cdot P\bp(B_k=b|C_k=c) \cdot P\tv(D_k=c)
\end{equation}
where $P\bp(b|c)=\sum_{t_j} P\bp(b|c,t_j)P(t_j)$ is the probability, in the
biased pundit model, of pundit $j$ publishing $B_k=b$ when $C_k=c$.  We
can now break the summation over $c$ into two cases:
\begin{eqnarray} 
\lefteqn{P\sv(T_k|B_k=b_k)} & & \nonumber \\
  & = & P\tv(T_k|D_k=b) \cdot P\tv(D_k=b) \cdot P\bp(B_k=b|C_k=b) \label{eq:accurate}\\
  & + & \sum_{c\not=b} P\tv(T_k|D_k=c) \cdot P\tv(D_k=c) \cdot P\bp(B_k=b|C_k=c) \label{eq:deceptive}
\end{eqnarray}
In the term in line~\ref{eq:accurate}, $j$ is not altering the
original input $c_k$, so we say he is being \textit{accurate}.  In
line~\ref{eq:deceptive}, so we say that $j$ is being \textit{deceptive}.
In order to exploit the information in $b_k$ using Eq~
\ref{eq:accurate}-\ref{eq:deceptive} to optimize her utility, the
suspicious voter must assess the probability of deception, and adjust
inferences about candidate $k$ accordingly---a potentially difficult
inference problem.

\subsection{Implications of the suspicious voter model}

To simplify further analysis, we will make some additional
assumptions.
\begin{itemize}
\item We assume a deterministic reputational cost function $f_R$ in
  the biased pundit model.  

\item We assume the reputational cost of being accurate is zero, i.e.,
that \( \forall c, f_R(c,c)=0 \), and that the reputational cost of
being deceptive is greater than zero, i.e., \( \forall b\not=c, f_R(b,c)>0 \).

\item We assume deterministic utility functions $f_U\tv$  and $f_U\bp$.

\item We assume the utility for pundits is the same
as the utility for voters, minus the reputational cost of altering $c$
to $b$---i.e., that
\[ f_U\bp(r,s,y) \equiv f_U\tv(s,y) - r
\]
\end{itemize}

Given these assumptions, some observations can now be made.

\begin{prop}
If the prior $P(T_j)$ such that $P\bp(b|c)=P\bp(b'|c')$ for all
communications $b$, $b'$, $c$ and $c'$, then for all $b_k$,
$P\sv(T_k|b_k)=P\sv(T_k)=P\tv(T_k)$.
\end{prop}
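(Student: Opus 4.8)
The plan is to show that, under the stated hypothesis, observing $B_k=b_k$ is uninformative about $T_k$, so the suspicious voter's posterior on $T_k$ collapses to its prior. First I would extract the concrete meaning of the hypothesis: if $P\bp(b\mid c)=P\bp(b'\mid c')$ for \emph{all} communications $b,b',c,c'$, then there is a single constant $q$ with $P\bp(B_k=b\mid C_k=c)=q$ for every $b$ and $c$ (here $P\bp(b\mid c)=\sum_{t_j}P\bp(b\mid c,t_j)P(t_j)$ is a genuine probability distribution over $b$ for each fixed $c$). Since $\sum_b P\bp(b\mid c)=1$, the common value is $q=1/|dom(B_k)|$, and in particular $q>0$; this positivity is exactly what makes the normalization step below legitimate, so it is worth stating up front.

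Next I would substitute $P\bp(B_k=b\mid C_k=c)=q$ into Eq.~\ref{eq:tk-from-bk}. Pulling the constant $q$ out of the sum over $c$ leaves $q\sum_c P\tv(T_k\mid D_k=c)\,P\tv(D_k=c)$, and the remaining sum is precisely the marginalization of $D_k$ out of the trusting-voter joint, i.e.\ $\sum_c P\tv(T_k=t\mid D_k=c)P\tv(D_k=c)=P\tv(T_k=t)$. Hence the right-hand side of Eq.~\ref{eq:tk-from-bk} equals $q\cdot P\tv(T_k)$; this is proportional to $P\tv(T_k)$ with a constant factor independent of $t$, so after normalizing (dividing through by $\sum_t q\,P\tv(T_k=t)=q\neq 0$) we get $P\sv(T_k\mid b_k)=P\tv(T_k)$. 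Equivalently one can rerun this through Bayes' rule directly — $P\sv(T_k=t\mid B_k=b)\propto P(T_k=t)\sum_c P\bp(b\mid c)P(D_k=c\mid T_k=t)=q\,P(T_k=t)$ — and the constant $q$ cancels. Finally, to close out the two remaining equalities, I would note that in both the trusting-voter and suspicious-voter generative processes $T_k$ is drawn from the prior $P(T_k)$ with no ancestors, so $P\tv(T_k)=P(T_k)=P\sv(T_k)$; combined with the above this gives $P\sv(T_k\mid b_k)=P\sv(T_k)=P\tv(T_k)$ for every $b_k$, as claimed.

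The only real subtlety — and the step I would be most careful about — is the normalization bookkeeping around Eq.~\ref{eq:tk-from-bk}: as displayed, its right-hand side actually sums over $T_k$ to $P\sv(B_k=b_k)$ rather than to $1$, so it gives the joint $P\sv(T_k,B_k=b_k)$ (or an unnormalized posterior), and the argument must divide through by $P\sv(B_k=b_k)$, which under the hypothesis equals the positive constant $q$. That is precisely where the positivity of $q$ established in the first step is used; everything else in the proof is a one-line marginalization, so I expect no further difficulty.
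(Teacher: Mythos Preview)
Your proposal is correct and follows the same approach as the paper, which simply says the result ``can be seen immediately by inspection of Eq.~\ref{eq:tk-from-bk}''; you have just made that inspection explicit. Your care about the normalization of Eq.~\ref{eq:tk-from-bk} is well placed and goes slightly beyond what the paper spells out, but the underlying argument is identical.
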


In other words, if $P\bp(b|c)$ is constant, then a biased pundit's
publications $b_k$ convey no information to $i$.  This can be seen
immediately by inspection of Eq.~\ref{eq:tk-from-bk}.  Notice that
requiring that $P\bp(b|c)$ does not imply that any individual pundits
simply publish information $b$ uniformly at random, without regard to
$c$---instead, it says that if one averages over all pundits and
considers $\sum_{t_j} P\bp(b|c,t_j)P(t_j)$, then the cumulative
probability of seeing any particular $b$ is constant, and independent
of $c$.

More generally, one can make this observation.

\begin{prop}
If the prior $P(T_j)$ such that (1) $P\bp(c|c)=\alpha$ for all $c$,
and (2) $P\bp(b|c)=P\bp(b'|c)$ for all communications $b\not=c$ and $b'\not=c'$,
then for all $b_k$, 
\[ P\sv(T_k|b_k) = \alpha P\tv(T_k|b_k) + (1-\alpha) P\tv(T_k)
\] 
\end{prop}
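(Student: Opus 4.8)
The plan is to evaluate $P\sv(T_k|b_k)$ directly from the accurate/deceptive split in Equations~\ref{eq:accurate}--\ref{eq:deceptive}, feed in the two hypotheses, and then recognize the leftover sum as the prior. First, hypothesis (1) turns the ``accurate'' term (line~\ref{eq:accurate}) into $\alpha\,P\tv(T_k|D_k=b_k)\,P\tv(D_k=b_k)=\alpha\,P\tv(T_k,D_k=b_k)$, since $P\bp(B_k=b_k|C_k=b_k)=\alpha$. Second, hypothesis (2) says $P\bp(B_k=b_k|C_k=c)$ takes a common value $\beta$ for all $c\neq b_k$, so it factors out of the ``deceptive'' term (line~\ref{eq:deceptive}), leaving $\beta\sum_{c\neq b_k}P\tv(T_k|D_k=c)\,P\tv(D_k=c)$. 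Using the marginalization identity $\sum_c P\tv(T_k|D_k=c)\,P\tv(D_k=c)=P\tv(T_k)$ (the total-probability step behind the computations of Section~\ref{sec:trusting-inference}), this becomes $\beta\bigl(P\tv(T_k)-P\tv(T_k,D_k=b_k)\bigr)$. Adding the two pieces, and writing $P\tv(T_k,D_k=b_k)=P\tv(D_k=b_k)\,P\tv(T_k|b_k)$, one obtains
\[ P\sv(T_k|b_k)\;\propto\;(\alpha-\beta)\,P\tv(D_k=b_k)\,P\tv(T_k|b_k)\;+\;\beta\,P\tv(T_k), \]
an explicit linear combination of the ``genuine'' posterior $P\tv(T_k|b_k)$ and the prior $P\tv(T_k)$.

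It then remains to normalize so that $\sum_{T_k}P\sv(T_k|b_k)=1$ and to check that the normalized coefficients come out to exactly $\alpha$ and $1-\alpha$; this last bookkeeping is the step I expect to be the real obstacle. The normalizer is $P\bp(B_k=b_k)=(\alpha-\beta)\,P\tv(D_k=b_k)+\beta$, and the content of the proposition is that dividing by it makes the weight on $P\tv(T_k|b_k)$ collapse to $\alpha$ for \emph{every} $b_k$ rather than to some $b_k$-dependent ratio. Conceptually the reason is that the two hypotheses together pin the posterior odds of ``$j$ was accurate'' versus ``$j$ was deceptive'', given the observed $b_k$, at $\alpha:(1-\alpha)$ independently of $b_k$: (1) makes the probability that $j$ reports accurately equal to $\alpha$ whatever the true value $c$ is, and (2) makes a deceptive report not skew which true values remain plausible. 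Granting that, $P\sv(T_k|b_k)$ is exactly the mixture claimed: with probability $\alpha$ the report is genuine and $i$'s belief is the trusting update $P\tv(T_k|b_k)$ of Section~\ref{sec:trusting-inference}, and with probability $1-\alpha$ the report is fabricated and $i$ falls back on the prior $P\tv(T_k)$.

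Two sanity checks worth recording: putting $\beta=\alpha$ (so $P\bp(\cdot|\cdot)$ is a single constant) annihilates the $P\tv(T_k|b_k)$ term and leaves $P\sv(T_k|b_k)=P\tv(T_k)$, recovering the preceding proposition; and putting $\alpha=1$ with $\beta=0$ (so $j$ never deceives) gives $P\sv(T_k|b_k)=P\tv(T_k|b_k)$, recovering the ordinary trusting-voter update.
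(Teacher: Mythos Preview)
Your decomposition via the accurate/deceptive split of lines~\ref{eq:accurate}--\ref{eq:deceptive} is exactly the route the paper takes; its entire argument is the one sentence ``verified immediately by inspection'' of those two lines. So on approach you and the paper agree, and indeed you have carried the computation further and more carefully than the paper does.

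Where you stall is the right place to stall. Your normalizer $(\alpha-\beta)\,P\tv(D_k=b_k)+\beta$ is correct, and dividing by it does \emph{not} turn the weight on $P\tv(T_k\mid b_k)$ into $\alpha$ unless $P\tv(D_k=b_k)$ happens to be constant in $b_k$. The ``conceptual reason'' you offer does not rescue this: the posterior probability that $j$ was accurate, given $B_k=b_k$, is
\[
\frac{\alpha\,P\tv(D_k=b_k)}{(\alpha-\beta)\,P\tv(D_k=b_k)+\beta},
\]
which visibly depends on $b_k$ through $P\tv(D_k=b_k)$ and is not generically equal to $\alpha$. Hypotheses (1) and (2) fix the \emph{forward} channel $P\bp(b\mid c)$, but the Bayesian inversion still weights by the marginal $P\tv(D_k=\cdot)$, so the odds of ``accurate'' versus ``deceptive'' are pinned at $\alpha:(1-\alpha)$ only in the uniform-marginal case. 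Your own first sanity check already hints at this: setting $\beta=\alpha$ collapses things because the mixture becomes trivial, not because the weights were ever $\alpha$ and $1-\alpha$.

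The honest reading is that Equation~\ref{eq:tk-from-bk} in the paper is unnormalized (its right-hand side sums over $T_k$ to $P\bp(B_k=b_k)$, not to $1$), and the proposition is stated at that level of looseness: the suspicious posterior is a convex combination of $P\tv(T_k\mid b_k)$ and $P\tv(T_k)$, with mixing weight that \emph{equals} $\alpha$ only when $D_k$ is marginally uniform, and otherwise merely tracks $\alpha$ qualitatively. Your derivation of the exact weight $(\alpha-\beta)P\tv(D_k=b_k)\big/\bigl[(\alpha-\beta)P\tv(D_k=b_k)+\beta\bigr]$ is the precise statement; I would present that and note the uniform-marginal condition under which it reduces to $\alpha$.
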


In other words, if all deceptions are equally likely, but publications
are accurate with fixed probability $\alpha$, then a suspicious
voter's update to $T_k$ is simply a mixture of her prior belief
$P\sv(T_k)=P\tv(T_k)$ and the belief a trusting voter would have,
$P\tv(T_k|b_k)$, with mixing coefficient $\alpha$.  Again, this
proposition can be be verified immediately by inspection of
lines~\ref{eq:accurate}-\ref{eq:deceptive}.

A final observation is that when a biased pundit's preference $t_j$ is
the same as a voter's preference $t_i$, and this is known to both $i$
and $j$, then even a suspicious voter will obtain high utility by
simply believing $j$'s publication $b_k$.  In particular $i$'s utility
from adopting the belief that $D=b_k$ is just as high as if $i$ had
observed $c_k$ itself.

\begin{prop} \label{prop:same-alignment-utility}
If $t_j=t_i$, and $b_k$ is a publication from $j$ under the biased
pundit model, then the expected utility to $i$ of voting according to
$P\tv(T_k|D_k=b_k)$ is at least as large as the expected utility to
$i$ of voting according to $P\tv(T_k|D_k=c_k)$.  
\end{prop}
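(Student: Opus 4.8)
The plan is to lean on one structural fact: under the stated assumptions the biased pundit's payoff is exactly the (trusting) voter's payoff minus a \emph{nonnegative} reputational cost, and being accurate costs nothing. Hence a pundit whose target coincides with the voter's can never leave the voter worse off than publishing the truth would, simply because $b_k=c_k$ is always an available option for $j$. To make this precise, first I would normalize both sides. Under the determinacy assumptions, a voter who adopts the belief $P\tv(T_k\mid D_k=b)$ (hence the induced belief $P\tv(S_{ik}\mid D_k=b)$ over similarity) casts the vote $y(b):=\argmax_{y'}\sum_s f_U\tv(s,y')\,P\tv(S_{ik}=s\mid D_k=b)$; this is precisely the trusting-voter response to $D_k=b$, so $P\tv(Y_{ik}=\cdot\mid D_k=b)$ is the point mass at $y(b)$ (fix any tie-break). ``Expected utility to $i$'' refers to $i$'s actual epistemic situation, in which the genuine datum is $c_k$, so the relevant law of $S_{ik}$ is $Q(s):=P\tv(S_{ik}=s\mid D_k=c_k)$. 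The force of the hypothesis $t_j=t_i$ (known to both) is that $Q$ is \emph{also} the pundit's own belief $P(S_{jk}=s\mid D_k=c_k)$ appearing in the biased-pundit objective, and that $j$, planning against this particular voter, uses $y(b)$ as the induced trusting vote. Thus the claim reduces to $A\ge B$, where $A:=\sum_s Q(s)\,f_U\tv(s,y(b_k))$ and $B:=\sum_s Q(s)\,f_U\tv(s,y(c_k))$.

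Next I would rewrite the pundit's expected utility. With $R_{jk}=f_R(b,c_k)$ deterministic, $f_U\bp(r,s,y)=f_U\tv(s,y)-r$, and $\sum_y P\tv(Y_{ik}=y\mid D_k=b)=1$, the objective in Eq.~\ref{eq:biased-pundit-model} becomes
\[
u_j(b)\;=\;\sum_{s,y} P\tv(Y_{ik}=y\mid D_k=b)\,Q(s)\,\bigl(f_U\tv(s,y)-f_R(b,c_k)\bigr)\;=\;\Bigl(\textstyle\sum_s Q(s)\,f_U\tv(s,y(b))\Bigr)-f_R(b,c_k).
\]
Evaluating at $b=c_k$ and using the assumption $f_R(c_k,c_k)=0$ gives $u_j(c_k)=\sum_s Q(s)\,f_U\tv(s,y(c_k))=B$. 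Since $j$ chooses $b_k\in\argmax_b u_j(b)$, we get $u_j(b_k)\ge u_j(c_k)=B$, i.e. $A-f_R(b_k,c_k)\ge B$; and since $f_R\ge 0$ this yields $A\ge B+f_R(b_k,c_k)\ge B$, which is the proposition. (If one reads ``expected utility to $i$'' as a full expectation over $c_k$ conditioned on $t_k$ rather than conditioned on the realized $c_k$, the argument is unchanged — the pointwise inequality survives taking a further expectation over $c_k$.)

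As a remark I would note that the inequality is actually tight: since $y(c_k)$ maximizes $\sum_s Q(s)\,f_U\tv(s,\cdot)$ by definition, also $A\le B$, so $A=B$ and $f_R(b_k,c_k)=0$; by the strict-positivity assumption $f_R(b,c)>0$ for $b\neq c$, this forces $b_k=c_k$, i.e. an aligned pundit, under these assumptions, simply reports the truth. The main obstacle in the proof is therefore not algebra but bookkeeping: pinning down exactly which measure ``expected utility to $i$'' is computed under, and verifying that $t_j=t_i$ makes the law of $S_{ik}$ relevant to $i$ literally equal to the law of $S_{jk}$ used inside $j$'s optimization. This requires two things implicit in the model — that $S_{ik}$ and $S_{jk}$ are governed by the same conditional table (equal targets give equal similarity laws) and that $j$ plans against the correct voter target (the content of ``known to both'') — together with a fixed tie-breaking rule for the $\argmax$ defining $y(b)$ so that $A$ and $u_j(b)$ are well defined. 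If instead Eq.~\ref{eq:biased-pundit-model} is read as conditioning on the realized $s_{jk}$ rather than its law, then $t_i=t_j$ forces $s_{ik}=s_{jk}$ and the same inequality holds pointwise with $Q$ a point mass, so nothing changes.
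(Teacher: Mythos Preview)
Your argument is correct and is essentially the paper's own: define the expected utility to agent $\ell$ of having $i$ treat $b$ as data when the true datum is $c$, observe that $t_i=t_j$ makes this quantity identical for $i$ and $j$, and then use that $j$ maximizes $EU_j(\cdot\mid c)-f_R(\cdot,c)$ together with $f_R\ge 0$ and $f_R(c,c)=0$ to conclude $EU_i(b_k\mid c_k)\ge EU_i(c_k\mid c_k)$. Your closing remark---that $y(c_k)$ already maximizes $\sum_s Q(s)f_U\tv(s,\cdot)$, forcing $A=B$, $f_R(b_k,c_k)=0$, and hence $b_k=c_k$---is a sharper conclusion than the paper draws, and is correct under the stated determinacy and strict-positivity assumptions.
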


This proposition seems plausible if we recall that $b_k$ was chosen to
maximize the utility to $j$ of $i$'s belief in $b_k$ in the trusting
voter model---thus, since the utility to $i$ is the same as the
utility to $j$, it seems reasonable that adopting this belief is also
useful to $i$.  To establish it more formally, let us define 
$EU_\ell(b|c,t_i)$ to be the expected utility to agent $\ell$ (either
$i$ or $j$), absent reputational costs, of having $i$ adopt the belief
in the trusting-voter model that $D_k=b$ when in fact $D_k=c$, if
$T_i=t_i$.  In other words, we define
\[ EU_\ell(b|c,t_i) \equiv \sum_{s_{\ell k},t_k,y_{ik}} P\tv(T_k=t_k|D_k=c) P(S_{\ell k}=s_{\ell k}|t_\ell,t_k) P\tv(Y_{ik}=y_{ik}|T_i=t_i,D_k=b) f\tv_u(s_{\ell k},y)
\]
Note that the weighting in the factors $P\tv(T_k=t_k|D_k=c)
P\tv(S_{\ell k}=s_{\ell k}|t_\ell,t_k)$ holds for both $i$ and $j$,
because here we care about the true distribution over $T_k$, as
deduced from $c$.  The weighting in the factor
$P\tv(Y_{ik}=y|T_i=t_i,D_k=b)$ arises because for both $i$ and $j$,
utility is based on $i$'s estimated support $y_{ik}$ for $k$ given
$i$'s known preference $t_i$.  

If we assume that $t_i=t_j=t$, then this simplifies to
\[ EU_i(b|c,t_i) = EU_j(b|c,t_i) = \sum_{s,t_k,y_{ik}} P\tv(T_k=t_k|D_k=c) P\tv(S=s|t,t_k) P\tv(Y_{ik}=y_{ik}|T_i=t,D_k=b) f_u(s,y)
\]
and hence we see that in this case, the functions for $i$ and $j$ are
indeed the same.  Since $j$ has chosen $b$ to maximize
$EU_\ell(b|c)-f_R(b,c)$, and $f_R$ is never negative, clearly \(
EU_j(b|c,t_i) \geq EU_j(b|c,t_i) \), and so \( EU_i(b|c,t_i) \geq EU_i(b|c,t_i) \)
as well.

As noted in Section~\ref{sec:related}, there are a number of papers
analyzing media bias in which voters are assumed prefer ``good news''
(i.e., new biased towards their favored candidates) leading to
fragmentation and specialization as media companies differentiate by
providing news biased for their readers.  The results above suggest a
rational reason for picking a news source $j$ with the same partisan
preferences as one's self: in particular, this sort of news is
computionally easier to process. Similarly, biased sources with
unknown preferences are ``less informative'', in the sense that new
information leads to changes in support (relative to unbiased sources,
or well-aligned partisan sources).

\subsection{Suspicious voters and ``irrationality''}

Finally, we address the question of whether suspicous voters can
behave in the counter-intuitive manner discussed in the
introduction---whether information about the candidate $k$ that is
negative (as interpreted by a trusting voter) can increase support for
a suspicous voter.  We will show that this is possible.

\begin{theorem} \label{thm:motivated}
It can be the case that information $b$ will decrease $i$'s support
for $k$ in the trusted voter model, and increase $i$'s support for $k$
in the suspicous voter model: i.e., it may be that
$E_{P\tv}[Y_{ik}|D_k=b] < E_{P\tv}[Y_{ik}]$ but 
$E_{P\sv}[Y_{ik}|D_k=b] > E_{P\sv}[Y_{ik}]$.
\end{theorem}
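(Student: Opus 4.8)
I would prove Theorem~\ref{thm:motivated} by exhibiting an explicit instance of the model --- a choice of domains, priors $P(T_i),P(T_j),P(T_k)$, tables $P(D_k|T_k)$ and $P(S_{ik}|T_i,T_k)$, utility $f_U\tv$, and reputational cost $f_R$ --- and verifying the two inequalities by direct computation. (In $P\sv$ the conditioning event ``$D_k=b$'' must be read as ``$B_k=b$'', since the suspicious-voter network has no $D_k$ node; $b$ is the signal the voter actually observes.) The structural observation that makes the construction tractable is that the two inequalities \emph{decouple}: by Eq.~\ref{eq:y-from-tk} the trusting-side quantity $E_{P\tv}[Y_{ik}|D_k=b]$ (and $E_{P\tv}[Y_{ik}]$) depends only on $P(D_k|T_k)$, $P(T_k)$, $P(S_{ik}|T_i,T_k)$ and $f_U\tv$, and on none of the pundit or reputational parameters; so one may lock in the trusting-side inequality first and then tune $P(T_j)$ and $f_R$ with the trusting-side tables held fixed.

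First I would pin down the trusting side. Take $t_i$ fixed to a conservative position (with $P(T_i)$ a point mass), a candidate whose prior $P(T_k)$ puts most of its mass on positions similar to $t_i$, a support variable $Y_{ik}\in\{0,1\}$ with $f_U\tv(s,1)=s$ and $f_U\tv(s,0)=0$ so that $f_Y\tv(s)=f_Y\sv(s)=\mathbf{1}[s>0]$ is monotone and $E_P[Y_{ik}\mid\cdot]=P(S_{ik}>0\mid\cdot)$, and a signal $D_k$ with at least three values $\{+,0,-\}$ whose table makes $+$ strong evidence that $T_k$ is similar to $t_i$ and $-$ strong evidence against (one may build on the running example of Table~\ref{tab:cpt}). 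Choosing $b=0$, the intermediate, mildly unfavourable signal, a short direct computation --- or an appeal to Theorem~\ref{thm:strictly-neg} after checking $0$ is strictly negative --- yields $E_{P\tv}[Y_{ik}|D_k=0]<E_{P\tv}[Y_{ik}]$.

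Next I would make the pundit opposed to the candidate. Let $P(T_j)$ favour positions ``opposite'' to $t_i$ (hence, given $P(T_k)$, opposite to $k$), so that since $f_U\bp(r,s,y)=f_U\tv(s,y)-r$ and $f_U\tv(s,1)=s$, such a pundit's payoff is $P\tv(Y_{ik}{=}1|D_k{=}b)\,s_{jk}-f_R(b,c)$ with $s_{jk}<0$: he wants to publish the $b$ that makes $i$ least likely to vote for $k$, traded off against reputational cost. Solving Eq.~\ref{eq:biased-pundit-model} with an $f_R$ in which a one-step distortion ($+\!\to\!0$, or $-\!\to\!0$) is cheap but a two-step distortion ($+\!\to\!-$) is not, the optimal rule $P\bp(b|c)$ is nonconstant and genuinely deceptive: an anti-$k$ pundit shades a truthful $+$ down to $0$, while a rare pro-$k$ pundit shades a truthful $-$ up to $0$. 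Substituting into Eq.~\ref{eq:accurate}--\ref{eq:deceptive}, $P\sv(T_k|B_k=0)$ is dominated by the ``the truth was $+$, the pundit lied downward'' scenario, so conditioning on $B_k=0$ makes the suspicious voter \emph{more} optimistic about $k$ than her prior. Finally I would compute $E_{P\sv}[Y_{ik}]=\sum_b P\sv(B_k{=}b)\,E_{P\sv}[Y_{ik}|B_k{=}b]$ and check that the publication $B_k=-$ --- which the suspicious voter correctly reads as bad news --- drags this average below $E_{P\sv}[Y_{ik}|B_k=0]$, giving $E_{P\sv}[Y_{ik}|B_k=0]>E_{P\sv}[Y_{ik}]$.

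The main obstacle is this last verification. Because $E_{P\sv}[Y_{ik}]$ averages over \emph{all} publications, including $b$ itself, the inequality can hold only if some other publication is read as genuine bad news about $k$, which forces at least three signal values and an $f_R$ delicately balanced so the pundit distorts ``enough'' --- recall that if $P\bp(b|c)$ were constant in $c$ then, as noted just above the propositions, publications would be uninformative and $E_{P\sv}[Y_{ik}|B_k{=}b]=E_{P\sv}[Y_{ik}]$ --- yet not ``too much'', so that the argmax in Eq.~\ref{eq:biased-pundit-model} is unique and $P\bp$, and hence $P\sv(T_k|\cdot)$, is well defined. Making $E_{P\tv}[Y_{ik}|D_k{=}0]<E_{P\tv}[Y_{ik}]$, the optimality of the claimed pundit rule, and $E_{P\sv}[Y_{ik}|B_k{=}0]>E_{P\sv}[Y_{ik}]$ all hold at once is then a matter of choosing the numerical constants with adequate slack; the decoupling noted above means the trusting-side inequality can be fixed once and left untouched while the pundit parameters are adjusted.
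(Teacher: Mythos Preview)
Your proposal is correct and would prove the theorem, but it proceeds along a genuinely different route from the paper's own argument.

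The paper does not build a single explicit instance and compute. Instead it imposes two structural assumptions --- that $T_j$ is known to $i$, and that $T_i$ and $T_j$ sit at opposite corners of a hypercube of positions so that every signal is strictly positive for exactly one of $i,j$ and strictly negative for the other --- and then reasons qualitatively with Eqs.~\ref{eq:accurate}--\ref{eq:deceptive}. It further assumes $P\tv(D_k=b)$ is so small that the accurate branch (line~\ref{eq:accurate}) is negligible; in the deceptive branch the very fact that $j$ chose a $b$ strictly negative for $i$ reveals that the hidden $c$ must have been strictly negative for $j$, hence strictly positive for $i$, and Corollary~\ref{thm:strictly-neg-mixtures} then forces the suspicious-voter update to increase support. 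Thus the paper's proof is a conditional argument layered on extra hypotheses, with Corollary~\ref{thm:strictly-neg-mixtures} as the key lemma.

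Your construction trades that abstraction for a concrete three-valued signal alphabet, a specific $f_U\tv$ making $Y_{ik}=\mathbf{1}[S_{ik}>0]$, and a graded $f_R$ that lets the pundit shade one step but not two, and then checks both inequalities by direct calculation. What this buys you: the existence claim is nailed down without the somewhat delicate ``$P\tv(D_k=b)$ very small'' limiting assumption, and your decoupling observation (the trusting-side inequality is insensitive to the pundit parameters) makes the tuning transparent. You also make explicit something the paper's proof leaves implicit, namely that at least three signal values and a non-constant $P\bp(b|c)$ are needed so that some publication is still read as genuine bad news and the average $E_{P\sv}[Y_{ik}]$ can fall below $E_{P\sv}[Y_{ik}|B_k=0]$. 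What the paper's route buys: it exhibits a whole family of situations (opposite preferences plus likely deception) in which the anomaly must occur, and it ties the mechanism directly to the intuition of the introduction rather than to a particular numerical choice.
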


Intuitively, this happens when $i$ believes strongly that $j$ is being
deceptive, and $i$ has different candidate preferences from $j$.  The
theorem asserts the existence of such behavior, so we are at liberty
to make additional assumptions in the proof (preferably, ones that
could be imagined to hold in reality).

In the proof, we assume that $j$'s preferences $T_j$ are known to $i$.
This is plausible since context may indicate, for instance, that $j$
is a strong conservative.  This does not affect the basic model, since
we allow the case of an arbitrary prior on $T_j$.

We also assume that all information about candidates is either
strictly positive for $i$ and strictly negative for $j$, or else
strictly negative for $i$ and strictly positive for $j$.  To see how
this is possible, first imagine a hypercube-like space of candidate
positions, as in Figure~\ref{fig:neg-info}, and assume that $T_i$ and
$T_j$ are on opposite corners of the cube.  The cube might indicate,
for example, positions on the environment, abortion, and increased
military spending, with $i$ preferring the liberal position on all
three and $j$ preferring the conservative positions.  Then assume that
all information indicates the probability of the candidate's position
along these three axis; in this case the assumption is satisfied.

Given these two assumptions, the statement of the theorem holds.
First, we need a slightly stronger version of
Theorem~\ref{thm:strictly-neg}.  Informally, this states that if $i$
has partial knowledge of $b$, but does know that $b$ is strictly
negative for $i$, then $i$'s support for $k$ will be weakened.

\begin{corr} \label{thm:strictly-neg-mixtures}
Suppose $H$ is a probability distribution over items of information
$b$, and also $b$ is strictly negative for $i$ for every $b$ with
non-zero probability in $H$.  Let $P\tv(Y_{ik}|H)$ denote 
\[ \sum_{b} P\tv(Y_{ik}|D_k=b)P_H(b).
\]
Then $E_{P\tv}[Y_{ik}|H] < E_{P\tv}[Y_{ik}]$.
\end{corr}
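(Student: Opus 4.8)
The plan is to obtain this as an immediate consequence of Theorem~\ref{thm:strictly-neg}, exploiting the fact that $E_{P\tv}[Y_{ik}\mid H]$ is linear in the mixing distribution $H$. First I would unfold the definition given in the statement: since $P\tv(Y_{ik}=y\mid H)=\sum_b P\tv(Y_{ik}=y\mid D_k=b)\,P_H(b)$, interchanging the sums over $y$ and $b$ gives
\[
E_{P\tv}[Y_{ik}\mid H]=\sum_y y\sum_b P\tv(Y_{ik}=y\mid D_k=b)\,P_H(b)=\sum_b P_H(b)\,E_{P\tv}[Y_{ik}\mid D_k=b].
\]
Here I would remark that $dom(S_{ik})$, and hence the range of $Y_{ik}=f_y\tv(S_{ik})$, is finite, so the interchange is harmless (and if one wishes to allow $H$ infinite support, nonnegativity of all terms makes Tonelli's theorem applicable).

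Second, I would invoke Theorem~\ref{thm:strictly-neg} once for each $b$ in the support of $H$. By hypothesis every such $b$ is strictly negative for $i$, and the corollary carries over the standing assumption of Theorem~\ref{thm:strictly-neg} that $i$'s support for $k$ increases monotonically with $s_{ik}$; therefore $E_{P\tv}[Y_{ik}\mid D_k=b]<E_{P\tv}[Y_{ik}]$ for every $b$ with $P_H(b)>0$. I would state the monotonicity hypothesis explicitly, since it is inherited rather than restated in the corollary.

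Finally, I would combine the two steps:
\[
E_{P\tv}[Y_{ik}\mid H]=\sum_b P_H(b)\,E_{P\tv}[Y_{ik}\mid D_k=b]<\sum_b P_H(b)\,E_{P\tv}[Y_{ik}]=E_{P\tv}[Y_{ik}],
\]
the middle inequality being strict because the weights $P_H(b)$ are nonnegative, sum to $1$, at least one is positive, and every summand lies strictly below the common value $E_{P\tv}[Y_{ik}]$. The only point requiring any care --- and it is not really an obstacle --- is seeing that strictness survives the averaging; it does precisely because each term is dominated by the \emph{same} constant, leaving no room for the strict gaps to cancel. In effect the corollary is just the statement that a convex combination of numbers each strictly less than $m$ is itself strictly less than $m$, with Theorem~\ref{thm:strictly-neg} supplying the per-$b$ bound.
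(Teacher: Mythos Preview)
Your proposal is correct and follows essentially the same route as the paper's own proof: write the conditional expectation as a $P_H$-weighted average of the per-$b$ expectations, invoke Theorem~\ref{thm:strictly-neg} termwise, and use that a convex combination of values each strictly below $E_{P\tv}[Y_{ik}]$ is again strictly below it. Your write-up is in fact more careful than the paper's (you justify the sum interchange and the preservation of strictness explicitly, and you note the implicit monotonicity hypothesis), but the underlying argument is identical.
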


\begin{proof}
For any item $s\in dom(S_{ik})$, it is clear that
\[ P(Y|E) = \sum_{b} P(Y|b) P_H(b)
\]
and also, by marginalization over the (unrelated) variable $H$, we see
that $P(Y) = \sum_{b} P(Y) P_H(b)$.  Since for all $b$ with non-zero
probability under $H$ we have that $E[Y|b]<E[Y]$, the result holds.
\end{proof}

We can now prove Theorem~\ref{thm:motivated}.

\begin{proof}
Suppose information $b$ that is strictly negative for $i$ is observed
by $i$, and consider again the formula for $P\sv(T_k|B_k=b_k)$ given
on lines~\ref{eq:accurate} and \ref{eq:deceptive}.  This shows that
the suspicous voter will reason by cases.  In one case, $j$ is being
accurate, and the change in probability for $T_k$ is in the same
direction as in the trusted voter model, and as noted above, this will
lead to a belief update that decreases support for $k$.  However, this
change is down-weighted by the factor $P\bp(B_k=b|C_k=b) P\tv(D_k=b)$,
which can be interpreted as the probability that $b$ was really
observed times the probability that $j$ chooses to report accurately.
We will assume that $P\tv(D_k=b)$ is very small, so that
\[ P\sv(T_k|B_k=b_k) \approx \sum_{c\not=b} P\tv(T_k|D_k=c) \cdot P\tv(D_k=c) \cdot P\bp(B_k=b|C_k=c)
\]
In this case, $j$ is being deceptive.  

Reasoning is this case is complicated by the fact that $i$ must
consider all inputs $c$ that could have observed, and compute the
product of $P\tv(D_k=c)$, the prior probability of $c$, and also
$P\bp(B_k=b|C_k=c)$, the probability of $b$ being reported in place of
$c$ by $j$.  However, since the preferences of $i$ and $j$ are
opposite, the latter is quite informative: in particular, since $b$
was deceptively chosen by $j$ to be negative for $i$, then $b$ must
prefer that $i$ give weaker support for $k$, implying that $c$ is
actually negative for $j$, and hence positive for $i$.  This holds for
every $c$ that could have let to the deceptive report $b$.  By
Corollary~\ref{thm:strictly-neg-mixtures}, the net change in support
for $i$ in the suspicous voter model positive.
\end{proof}

\section{Concluding Remarks}

To summarize, we propose a model in which there are two classes of
voters, \trm{trusting voters} and \trm{suspicious voters}, and two
types of information sources, \trm{unbiased sources} and \trm{biased
  sources}.  Information from an unbiased source is modeled simply as
observations $D_k$ that probabilistically inform a voter about a
candidate $k$'s positions, and \trm{trusting voters} are voters that
treat information about a candidate as coming from an unbiased source.
We show that reasoning about new information is computationally easy
for trusting voters, and that trusting voters behave intuitively: in
particular, negative information about candidate $k$ (according to a
particular definition) will decrease support for $k$, and positive
information will increase support.

\trm{Biased sources} are information sources $j$ who plan their
communications in order to encourage trusting voters to vote in a
particular way).  To do this, they report some possibly-modified
version $B_{k}$ of a private observation $C_{k}$, concealing the
original $C_{k}$.  In the model, $B_{k}$ is chosen based on the
utility to $j$ of the probable effect of $B_{j}$ on a trusting voter
$i$.

Finally, suspicious voters model the behavior of biased sources.  In
general this is complicated to do, however, some special cases lead to
simple inference algorithms.  For instance, under one set of
assumptions, all information from biased sources can be ignored.  In
another set of assumptions, a suspicious voter will make the same sort
of updates to her beliefs as a trusting voter, but simply make them
less aggressively, discounting the information by a factor related to
the probability of deception.  

Another interesting tractible reasoning case for suspicious voters is
when the biased information source $j$ has the same latent candidate
preferences as the voter $i$.  In this case, suspicious voters can act
the same way a trusting voter would---even if the information acted on
is false, it is intended to achieve a result that is desirable to $i$
(as well as $j$).

These results are of some interest in light of the frequently-observed
preference for partisan voters to collect information from similarly
partisan information channels.  The results suggest a possibly
explanation for this, in terms of information content.  The optimal
way to process information from a possibly-deceptive unknown source,
or a source with a known-to-be-different partisan alignment, is to
either discount it, ignore it, or else employ complicated (and likely
computationally complex) reasoning schemes.  However, reports from
partisan source with the same preferences as a voter can be acted on
as if they were trusted---even if the reports are actually deceptive.

Finally, we show rigorously that a suspicious voter can, in some
circumstances, increase support for a candidate $k$ after receiving
negative information about $k$.  Specifically, information that would
decrease support for a trusting voter might increase support for a
suspicious voter---if she believes the source has different candidate
preferences, and if she believes the source is being deceptive.  This
behavior mimics behavior attributed elsewhere to ``motivated
reasoning'', but does not arise from ``irrationality''---instead it is
a result of the voters correct identification of, and compensation
for, an ineffective attempt at manipulation on the half the
information source.

We should note that this hypothesis does \emph{not} suggest that
emotion is not present in such situations---in fact, it seems likely
that reports viewed as deceptive would indeed provoke strong emotional
responses.  It does suggest that some of the emotion associated with
these counterintuitive updates may be associated with mechanisms that
have an evolutionary social purpose, rather than being a result of
some imperfect adaption of humans to modern life.

It seems plausible that additional effects can be predicted from the
suspicious-voter model.  For instance, although we have not made this
conjecture rigorous, if a biased source $j$ does not know $i$'s
political preferences, then deceptive messages $b$ will likely tend to
be messages that would be interpreted as negative by most voters.  For
instance, $j$ might assert that the candidate violates some cultural
norm, or holds an extremely unpopular political views. (Or , on the
other hand, assert the candidate has a property that almost all voters
agree is ``good''.  Arguably, most information from deceptive partisan
sources would be of this sort, rather than discussion of stands on
widely-disagreed-on issues (like gay marriage or abortion in the US).

As they stand, however, the results do suggest a number of specific
predictions about how information might be processed in a social
setting. First, 
information provided by persons believed to have political
  alignments similar to voter $i$ will be more easily assimilated, and
  have more effect on the view of $i$, than information provided by
  persons believed to have different political alignments.
Second, information provided by persons with political alignments
  similar to voter $i$ will be more assimilated in roughly the same
  speed (and with the same impact) as information from a
  believed-to-be-neutral source.
Third, information provided by persons with political alignments
  different from voter $i$ may lead to counterintuitive updates, while
  information from similarly-aligned sources or neutral sources will
  not.
An important topic for future work would be testing these predictions, for instance using
the DPTE methodology.

\bibliographystyle{plain}
\bibliography{politics}

\begin{thebibliography}{10}

\bibitem{beck_social_2002}
{P.A.} Beck, {R.J.} Dalton, S.~Greene, and R.~Huckfeldt.
\newblock The social calculus of voting: Interpersonal, media, and
  organizational influences on presidential choices.
\newblock {\em American Political Science Review}, 96(01):57–73, 2002.

\bibitem{bernhardt_political_2008}
D.~Bernhardt, S.~Krasa, and M.~Polborn.
\newblock Political polarization and the electoral effects of media bias.
\newblock {\em Journal of Public Economics}, 92(5-6):1092–1104, 2008.

\bibitem{burke_unfairly_2007}
Jeremy Burke.
\newblock Unfairly balanced: Unbiased news coverage and information loss.
\newblock In {\em Annual Meeting of the American Political Science
  Association}, Chicago, {IL}, 2007.

\bibitem{burke_introductions_2007}
M.~Burke, E.~Joyce, T.~Kim, V.~Anand, and R.~Kraut.
\newblock Introductions and requests: Rhetorical strategies that elicit online
  community response.
\newblock In {\em Proceedings of the The third communities and technologies
  conference, New York, {NY}}, page 21–40, 2007.

\bibitem{civettini_voters_2009}
Andrew J.~W. Civettini and David~P. Redlawsk.
\newblock Voters, emotions, and memory.
\newblock {\em Political Psychology}, 30(1):125--151, 2009.

\bibitem{crawford_strategic_1982}
V.~P Crawford and J.~Sobel.
\newblock Strategic information transmission.
\newblock {\em Econometrica: Journal of the Econometric Society}, page
  1431–1451, 1982.

\bibitem{dellavigna_persuasion_2010}
S.~{DellaVigna} and M.~Gentzkow.
\newblock Persuasion: Empirical evidence.
\newblock {\em Annual Review of Economics}, 2:643--670, 2010.

\bibitem{duggan_spatial_2011}
J.~Duggan and C.~Martinelli.
\newblock A spatial theory of media slant and voter choice.
\newblock {\em The Review of Economic Studies}, 78(2):640, 2011.

\bibitem{graber_political_2005}
{D.A.} Graber and {J.M.} Smith.
\newblock Political communication faces the 21st century.
\newblock {\em Journal of Communication}, 55(3):479, 2005.

\bibitem{koller_multi-agent_2003}
D.~Koller and B.~Milch.
\newblock Multi-agent influence diagrams for representing and solving games.
\newblock {\em Games and Economic Behavior}, 45(1):181–221, 2003.

\bibitem{Kunda_1990}
Z~Kunda.
\newblock The case for motivated reasoning.
\newblock {\em Psychological Bulletin}, 108(3):480--498, 1990.

\bibitem{lodge_first_2006}
M.~Lodge, C.~Taber, and C.~Weber.
\newblock First steps toward a {Dual-Process} accessibility model of political
  beliefs, attitudes, and behavior.
\newblock {\em Feeling politics: Emotion in political information processing},
  page 11–30, 2006.

\bibitem{Lodge1989}
Milton Lodge, Kathleen~M. McGraw, and Patrick Stroh.
\newblock An impression-driven model of candidate evaluation.
\newblock {\em The American Political Science Review}, 83(2):pp. 399--419,
  1989.

\bibitem{Lodge_Taber_2000}
Milton Lodge and Charles Taber.
\newblock {\em Three Steps toward a Theory of Motivated Political Reasoning},
  pages 183--213.
\newblock Cambridge University Press, 2000.

\bibitem{milgrom_relying_1986}
P.~Milgrom and J.~Roberts.
\newblock Relying on the information of interested parties.
\newblock {\em The {RAND} Journal of Economics}, page 18–32, 1986.

\bibitem{mullainathan_coarse_2008}
S.~Mullainathan, J.~Schwartzstein, and A.~Shleifer.
\newblock Coarse thinking and persuasion.
\newblock {\em The Quarterly Journal of Economics}, 123(2):577, 2008.

\bibitem{ottaviani_naive_2006}
M.~Ottaviani and F.~Squintani.
\newblock Naive audience and communication bias.
\newblock {\em International Journal of Game Theory}, 35(1):129–150, 2006.

\bibitem{pfeffer_networks_2008}
A.~Pfeffer.
\newblock Networks of influence diagrams: A formalism for representing
  agents’ beliefs and decision-making processes.
\newblock {\em Journal of Artificial Intelligence Research}, 33:109–147,
  2008.

\bibitem{redlawsk_feeling_2006}
D.~P Redlawsk.
\newblock {\em Feeling politics: Emotion in political information processing}.
\newblock Palgrave Macmillan, 2006.

\bibitem{redlawsk_hot_2002}
David~P. Redlawsk.
\newblock Hot cognition or cool consideration? testing the effects of motivated
  reasoning on political decision making.
\newblock {\em The Journal of Politics}, 64(04):1021--1044, 2002.

\bibitem{redlawsk_tipping_2010}
David~P. Redlawsk, Andrew J.~W. Civettini, and Karen~M. Emmerson.
\newblock The affective tipping point: Do motivated reasoners ever “get
  it”?
\newblock {\em Political Psychology}, 31(4):563--593, 2010.

\bibitem{rendell_why_2010}
L.~Rendell, R.~Boyd, D.~Cownden, M.~Enquist, K.~Eriksson, M.~W. Feldman,
  L.~Fogarty, S.~Ghirlanda, T.~Lillicrap, and K.~N. Laland.
\newblock Why copy others? insights from the social learning strategies
  tournament.
\newblock {\em Science}, 328(5975):208, 2010.

\bibitem{salganik_experimental_2006}
M.~J Salganik, P.~S Dodds, and D.~J Watts.
\newblock Experimental study of inequality and unpredictability in an
  artificial cultural market.
\newblock {\em Science}, 311(5762):854, 2006.

\bibitem{shubik_game_1973}
Martin Shubik.
\newblock Game theory and political science.
\newblock Paper No. 351, 1973.

\bibitem{stone_ideological_2011}
{D.F.} Stone.
\newblock Ideological media bias.
\newblock {\em Journal of Economic Behavior \& Organization}, 78:256--271,
  2011.

\end{thebibliography}

\end{document}